\title{Liberty or Depth: Deep Bayesian Neural Nets Do Not Need Complex Weight Posterior Approximations}
\author{%
  Sebastian Farquhar \And Lewis Smith \And Yarin Gal \AND\vspace{-5mm}\\
  OATML, Department of Computer Science\\
  University of Oxford\\
  \texttt{sebastian.farquhar@cs.ox.ac.uk}
}
\begin{document}

\maketitle

\begin{abstract}
  We challenge the longstanding assumption that the mean-field approximation for variational inference in Bayesian neural networks is severely restrictive, and show this is not the case in \emph{deep} networks.
  We prove several results indicating that deep mean-field variational weight posteriors can induce similar distributions in function-space to those induced by shallower networks with complex weight posteriors.
  We validate our theoretical contributions empirically, both through examination of the weight posterior using Hamiltonian Monte Carlo in small models and by comparing diagonal- to structured-covariance in large settings.
  Since complex variational posteriors are often expensive and cumbersome to implement, our results suggest that using mean-field variational inference in a deeper model is both a practical and theoretically justified alternative to structured approximations.
\end{abstract}

\newtheorem{lemma}{Lemma}
\newtheorem{remark}{Remark}
\newenvironment{proofsketch}{%
  \renewcommand{\proofname}{Proof Sketch}\proof}{\endproof}
\newenvironment{elaboration}{%
  \renewcommand{\proofname}{Elaboration}\proof}{\endproof}

  \section{Introduction}

  While performing variational inference (VI) in Bayesian neural networks (BNNs) researchers often make the `mean-field' approximation which assumes that the posterior distribution factorizes over weights (i.e., diagonal weight covariance).
  Researchers have assumed that using this mean-field approximation in BNNs is a severe limitation.
  This has motivated extensive exploration of VI methods that explicitly model correlations between weights (see Related Work \S\ref{prior_work}).
  Furthermore, \citet{foong_expressiveness_2020} have identified pathologies in single-hidden-layer mean-field regression models, and have conjectured that these might exist in deeper models as well.

  However, the rejection of mean-field methods comes at a price. 
  Structured covariance methods have worse time complexity (see Table \ref{tbl:complexity}) and even efficient implementations take over twice as long to train an epoch as comparable mean-field approaches \citep{osawa_practical_2019}.
  Moreover, recent work has succeeded in building mean-field BNNs which perform well (e.g., \citet{wu_deterministic_2019}), creating a puzzle for those who have assumed that the mean-field approximation is too restrictive.

  We argue that for larger, deeper, networks the mean-field approximation matters less to the downstream task of approximating posterior predictive distributions \emph{over functions} than it does in smaller shallow networks.
  In essence: simple parametric functions need complicated weight-distributions to induce rich distributions in function-space; but complicated parametric functions can induce the same function-space distributions with simple weight-distributions.
  Complex covariance is computationally expensive and often cumbersome to implement though, while depth can be easy to implement and cheap to compute with standard deep learning packages.
  
  Rather than introducing a new method, we provide empirical and theoretical evidence that some of the widely-held assumptions present in the research community about the strengths and weaknesses of existing methods are incorrect.
  Even when performing VI in weight-space, one does not care about the posterior distribution over the weights, $p(\thet|\mathcal{D})$, for its own sake.
  Most decision processes and performance measures like accuracy or log-likelihood only depend on the induced posterior predictive, the distribution over function values $p(y|\mathbf{x}, \mathcal{D}) = \int p(y|\mathbf{x}, \thet)p(\thet|\mathcal{D})d\thet$.
  One way to have an expressive approximate posterior predictive is to have a simple likelihood function and a rich approximate posterior over the weights, $q(\thet)$, to fit to $p(\thet|\mathcal{D})$.
  But another route to a rich approximate predictive posterior is to have a simple $q(\thet)$, and a rich likelihood function, $p(y|\mathbf{x}, \thet)$---e.g., a deeper model mapping $\mathbf{x}$ to $y$.
  These arguments lead us to examine two subtly different hypotheses: one comparing mean-field variational inference (MFVI) to VI with a full- or structured- covariance; and the other comparing the expressive power of a mean-field BNN directly to the true posterior predictive over function values, $p(y|\mathbf{x}, \mathcal{D})$:
  \begin{enumerate}
    \item[\bf{Weight Distribution Hypothesis.}] For any BNN with a full-covariance weight distribution, there exists a deeper BNN with a mean-field weight distribution that induces a ``similar'' posterior predictive distribution in function-space.
    \item[\bf{True Posterior Hypothesis.}] For any sufficiently deep and wide BNN, and for any posterior predictive, there exists a \emph{mean-field} distribution over the weights of that BNN which induces the same distribution over function values as that induced by the posterior predictive, with arbitrarily small error.
  \end{enumerate}

  The Weight Distribution Hypothesis would suggest that we can trade a shallow complex-covariance BNN for deeper mean-field BNN without sacrificing the expressiveness of the induced function distribution.
  We start by analyzing linear models and then use these results to examine deep neural networks with piecewise linear activations (e.g., ReLUs).
  In linear deep networks---with no non-linear activations---a model with factorized distributions over the weights can be ``collapsed'' through matrix multiplication into a single \emph{product matrix} with a complex induced distribution.
  In \S\ref{linear}, we analytically derive the covariance between elements of this product matrix.
  We show that the induced product matrix distribution is very rich, and that three layers of weights suffice to allow non-zero correlations between any pair of product matrix elements.
  Although we do not show that any full-covariance weight distribution can be represented in this way, we do show that the Matrix Variate Gaussian distribution---a commonly used structured-covariance approximation---is a special case of a three-layer product matrix distribution, allowing MFVI to model rich covariances.
  In \S\ref{piecewise} we introduce the \emph{local product matrix}---a novel analytical tool for bridging results from linear networks into deep neural networks with piecewise-linear activations like ReLUs.
  We apply this more general tool to prove a partial local extension of the results in \S\ref{linear}.

  The True Posterior Hypothesis states that mean-field weight distributions can approximate the true predictive posterior distribution, and moreover we provide evidence that VI can discover these distributions.
  In \S\ref{UAT} we prove that the True Posterior Hypothesis is true for BNNs with at least two hidden layers, given arbitrarily wide models.
  In \S\ref{exp_hmc}, we investigate the optima discovered by mean-field VI using Hamiltonian Monte Carlo \citep{neal_bayesian_1995}.
  We show empirically that even in smaller networks, as the model becomes deeper, we lose progressively less information by using a mean-field approximation rather than full-covariance. We also conduct experiments with deep convolutional architectures and find no significant difference in performance between a particular diagonal and full covariance method (SWAG, \citep{maddox_simple_2019}), an effect which we find is consistent with results from other papers working with various posterior approximations.

  \section{Related Work}\label{prior_work}
  \backrefsetup{disable}
  \begin{wrapfigure}[11]{r}{0.48\textwidth}
    \makeatletter
  \def\@captype{table}
  \makeatother
  \vspace{-13.2mm}
    \resizebox{\textwidth}{!}{
    \begin{tabular}{lll}
      \toprule
      & \multicolumn{2}{c}{Complexity} \\ \cmidrule(lr){2-3}
      & Time            & Parameter    \\
      \midrule
MFVI \citep{hinton_keeping_1993}& $K^2$           & $K^2$        \\
Full \citep{barber_ensemble_1998}               & $K^{12}$        & $K^4$        \\
MVG \citep{louizos_structured_2016} & $K^3$           & $K^2$        \\
MVG-Inducing Point [ibid.] & $K^2 + P^3$     & $K^2$        \\
Noisy KFAC \citep{zhang_noisy_2018} & $K^3$           & $K^2$       \\
\bottomrule
\end{tabular}}
\caption{Complexity for forward pass in $K$---the number of hidden units for a square weight layer. Mean-field VI has better time complexity and avoids a numerically unstable matrix inversion. Inducing point approximations can help, but inducing dimension $P$ is then a bottleneck.}\label{tbl:complexity}
  \end{wrapfigure}
  \backrefsetup{enable}

  The mean-field approximation has been widely used for variational inference (VI) \citep{hinton_keeping_1993, graves_practical_2011, blundell_weight_2015} (see Appendix \ref{a:intro} for a brief primer on VI methods for Bayesian neural networks).
  But researchers have assumed that using the mean-field approximation for Bayesian inference in neural networks is a severe limitation since \citet{mackay_practical_1992} wrote that for BNNs the ``diagonal approximation is no good because of the strong posterior correlations in the parameters.''
  Full-covariance VI was therefore introduced by \citet{barber_ensemble_1998}, but it requires many parameters and has poor time complexity.
  The intractability of full-covariance VI led to extensive research into structured-covariance approximations \citep{louizos_structured_2016, zhang_noisy_2018, mishkin_slang_2019, oh_radial_2019}.
  However, these still have unattractive time complexity compared with mean-field variational inference (MFVI) (see Table \ref{tbl:complexity}) and are not widely used.
  Researchers have also sought to model richer approximate posterior distributions \citep{jaakkola_improving_1998, mnih_neural_2014, rezende_variational_2015, louizos_multiplicative_2017} or to perform VI directly on the function---but this becomes intractable for high-dimensional input \citep{sun_functional_2019}.
  
  Despite widespread assertions that the mean-field approximation results in a problematically restricted approximate posterior in deep neural networks, there has been no work \emph{in deep neural networks} demonstrating this; theoretical analysis and experimental work supporting this claim is typically based on shallow network architectures.
  For example, recent work has argued that the mean-field approximation is too restrictive and has identified pathologies in \emph{single-layer} mean-field VI for regression \citep{foong_expressiveness_2020}.
  We emphasise their theorems are entirely consistent with ours: we agree that the mean-field approximation could be problematic in small and single-layer models.
  While \citet{foong_expressiveness_2020} conjecture that the pathologies they identify extend to deeper models and classification problems, they do not prove this (see Appendix \ref{a:foong}).

  In contrast, \citet{hinton_keeping_1993} hypothesized that even with a mean-field approximating distribution, during optimization the parameters will find a version of the network where this restriction is least costly, which our work bears out.
  Moreover, others have successfully applied mean-field approximate posteriors \citep{khan_fast_2018,osawa_practical_2019, wu_deterministic_2019,farquhar_radial_2020} or even more restrictive approximations \citep{swiatkowski_k-tied_2020} in deep models, by identifying and correcting problems like gradient variance that have nothing to do with the restrictiveness of the mean-field approximation.

  \section{Emergence of Complex Covariance in Deep Linear Mean-Field Networks}
  \label{linear}

  In this section, we prove that a restricted version of the Weight Distribution Hypothesis is true in linear networks.
  Although we are most interested in neural networks that have non-linear activations, linear neural networks can be analytically useful \citep{saxe_exact_2014}.
  In \S\ref{piecewise} we give first steps to extend this analysis to non-linear activations.

  \textbf{Defining a Product Matrix: }Setting the activation function of a neural network, $\phi(\cdot)$, to be the identity turns a neural network into a deep linear model.
  Without non-linearities the weights of the model just act by matrix multiplication.
  $L$ weight matrices for a deep linear model can therefore be `flattened' through matrix multiplication into a single weight matrix which we call the \emph{product matrix}---$M^{(L)}$.
  For a BNN, the weight distributions induce a distribution over the elements of this product matrix.
  Because the model is linear, there is a one-to-one mapping between distributions induced over elements of this product matrix and the distribution over linear functions $y = M^{(L)}\mathbf{x}$.
  This offers us a way to examine exactly which sorts of distributions can be induced by a deep linear model on the elements of a product matrix, and therefore on the resulting function-space.

  \textbf{Covariance of the Product Matrix:} We derive the analytic form of the covariance of the product matrix in Appendix \ref{a:derivation}, explicitly finding the covariance of $M^{(2)}$ and $M^{(3)}$ as well as the update rule for increasing $L$.
  These results hold for \emph{any} factorized weight distribution with finite first- and second-order moments, not just Gaussian weights.
  Using these expressions, we show:
  
  \newtheorem{theorem}{Theorem}
  \newtheorem{proposition}{Proposition}
  \begin{restatable}{proposition}{lemgreaterthanzero}\label{lemma:covariance}
    For $L\geq3$, the product matrix $M^{(L)}$ of factorized weight matrices can have non-zero covariance between any and all pairs of elements. That is, there exists a set of mean-field weight matrices $\{W^{(l)} | 1 \leq l < L\}$ such that $M^{(L)} = \prod W^{(l)}$ and the covariance between any possible pair of elements of the product matrix:
    \begin{equation}
      \textup{Cov}(m^{(L)}_{ab}, m^{(L)}_{cd}) \neq 0,
    \end{equation}
    where $m^{(L)}_{ij}$ are elements of the product matrix in the $i$\textsuperscript{th} row and $j$\textsuperscript{th} column, and for any possible indexes $a$, $b$, $c$, and $d$.
  \end{restatable}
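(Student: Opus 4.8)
The plan is to reduce to the case $L=3$ and then invoke the closed-form second-moment expressions for the product matrix derived in Appendix~\ref{a:derivation}, evaluated at one deliberately simple family of factorized weight distributions. First I would record \emph{why} $L\geq 3$ is the right threshold, since this also reveals the mechanism. For the two-layer product $M^{(2)}=W^{(2)}W^{(1)}$ we have $m^{(2)}_{ab}=\sum_k w^{(2)}_{ak}w^{(1)}_{kb}$, so when $a\neq c$ and $b\neq d$ the entry $m^{(2)}_{ab}$ is a function of row $a$ of $W^{(2)}$ and column $b$ of $W^{(1)}$ only, while $m^{(2)}_{cd}$ depends on a \emph{disjoint} block of independent weights; hence $\textup{Cov}(m^{(2)}_{ab},m^{(2)}_{cd})=0$ identically. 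What rescues the three-layer case is that the middle matrix $W^{(2)}$ in $M^{(3)}=W^{(3)}W^{(2)}W^{(1)}$ is shared by every entry $m^{(3)}_{ab}=\sum_{k,l} w^{(3)}_{ak}w^{(2)}_{kl}w^{(1)}_{lb}$, so there is always a common source of randomness to couple any two entries. Additional layers can only add further shared randomness, so a witness for $L=3$ extends to all $L\geq 3$ either directly or through the recursive update rule of Appendix~\ref{a:derivation}.

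Next I would plug the following choice into the $M^{(3)}$ covariance formula: let every weight in all three matrices be independent with a common nonzero mean $\mu$ and a common positive variance $\sigma^2$ (Gaussian, say, but only the first two moments are used — exactly the generality claimed). Expanding $\textup{Cov}(m^{(3)}_{ab},m^{(3)}_{cd})=\mathbb{E}[m^{(3)}_{ab}m^{(3)}_{cd}]-\mathbb{E}[m^{(3)}_{ab}]\mathbb{E}[m^{(3)}_{cd}]$ as a sum over index tuples, each layer contributes a pairwise second moment equal to $\mu^2$ (distinct entries) or $\mu^2+\sigma^2$ (coinciding entries), so the summand always dominates the baseline $\mu^6$; hence the covariance is a sum of nonnegative terms. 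It then suffices to exhibit one strictly positive term: taking $k=k'$ and $l=l'$ forces the middle factor to be $\mu^2+\sigma^2$, yielding a contribution $d_1 d_2\,\mu^4\sigma^2>0$ (with $d_1,d_2$ the two inner widths), which is in fact the \emph{entire} covariance in the crucial case where $a,b,c,d$ are pairwise distinct. When $a=c$ and/or $b=d$ the outer factors can coincide too, adding only further positive terms, and when $(a,b)=(c,d)$ the expression is just the (nonzero) variance of $m^{(3)}_{ab}$. A minimal version of the witness takes both inner widths equal to $1$: then $M^{(3)}$ is a fixed random scalar times a rank-one outer product, visibly correlating every pair of entries.

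The step I expect to be the real obstacle — and the reason the statement is not vacuous — is precisely the pairwise-distinct case $a,b,c,d$, where the two-layer product gives exactly zero and so one must verify that the three-layer expansion genuinely produces a surviving nonzero term rather than another cancellation. The resolution is that the surviving term is the middle layer's variance propagated through the \emph{nonzero means} of the outer layers; consequently the argument breaks for zero-mean weights (all odd moments vanish and the cross terms collapse), so the construction must commit to outer-layer means bounded away from zero, and it breaks for $L=2$ because there is no middle layer. Everything else is routine bookkeeping with the first- and second-moment formulas of Appendix~\ref{a:derivation}, together with the observation above that the single choice $\mu\neq 0$, $\sigma^2>0$ makes all the required covariances strictly positive simultaneously.
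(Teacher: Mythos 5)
Your proposal is correct and follows essentially the same route as the paper's proof in Appendix~\ref{a:derivation}: the surviving term you identify for pairwise-distinct indices (the middle layer's variance propagated through the nonzero means of the outer layers, giving $d_1 d_2\,\mu^4\sigma^2$) is exactly the term the paper isolates in its expansion of $\hat{\Sigma}^{(3)}_{abcd}$, and your positivity-prevents-cancellation argument and recursive extension to $L>3$ match the paper's. The only difference is presentational: you expand moments directly for an i.i.d.\ witness, whereas the paper first derives the general recursive covariance formula and then specializes to positive means.
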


  This shows that a deep mean-field linear model is able to induce function-space distributions which would require covariance between weights in a shallower model.
  This is weaker than the Weight Distribution Hypothesis, because we do not show that all possible fully parameterized covariance matrices between elements of the product matrix can be induced in this way.\footnote{E.g., a full-covariance layer has more degrees of freedom than a three-layer mean-field product matrix (one of the weaknesses of full-covariance in practice).
  An $L$-layer product matrix of $K\times K$ Gaussian weight matrices has $2LK^2$ parameters, but one full-covariance weight matrix has $K^2$ mean parameters and $K^2 (K^2 + 1) / 2$ covariance parameters.
  Note also that the distributions over the elements of a product matrix composed of Gaussian layers are not in general Gaussian (see Appendix \ref{a:product_distributions} for more discussion of this point).}
  However, we emphasise that the expressible covariances become very complex.
  Below, we show that a lower bound on their expressiveness exceeds a commonly used structured-covariance approximate distribution.

  \textbf{Numerical Simulation:} To build intuition, in Figure \ref{fig:cov_heatmap-a}--\subref{fig:cov_heatmap-c} we visualize the covariance between entries of the product matrix from a deep mean-field VI linear model trained on FashionMNIST.
  Even though each weight matrix makes the mean-field assumption, the product develops off-diagonal correlations.
  The experiment is described in more detail in Appendix \ref{a:heatmap}.

  \begin{figure}[t]\RawFloats
    \centering
    \begin{subfigure}{0.19\textwidth}
        \includegraphics[width=\linewidth]{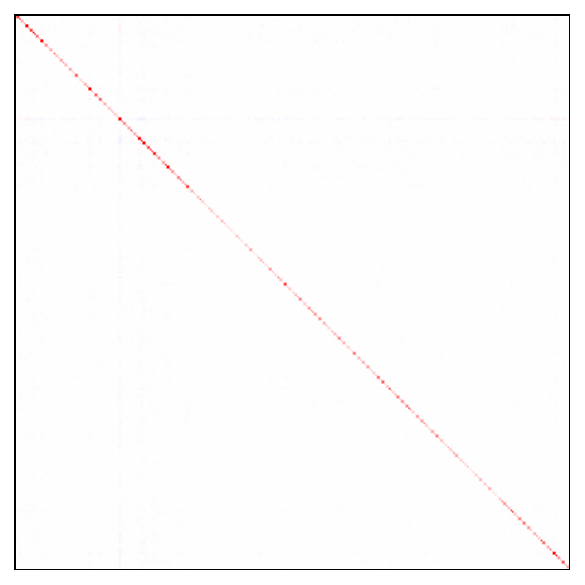}
        \caption{One weight\\ matrix.\\}
        \label{fig:cov_heatmap-a}
    \end{subfigure}\
    \begin{subfigure}{0.19\textwidth}
        \includegraphics[width=\linewidth]{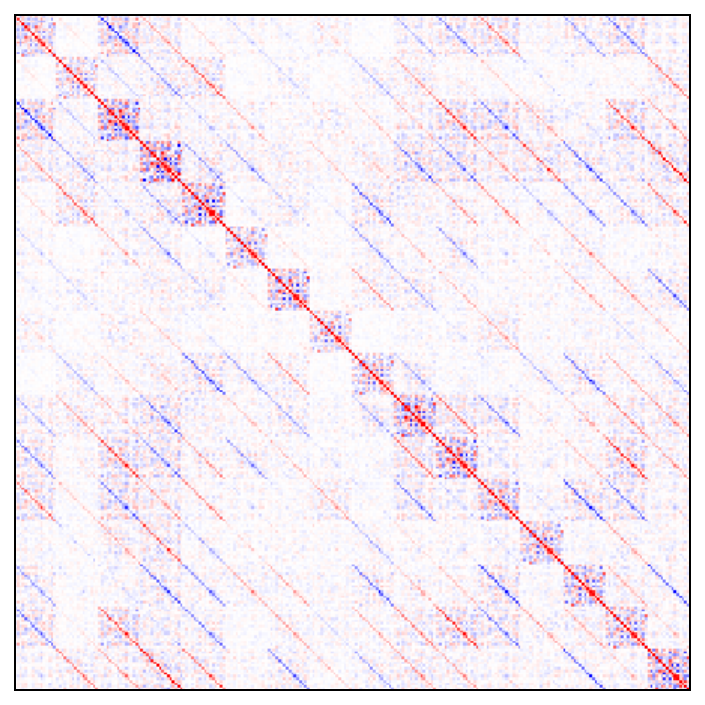}
        \caption{5-layer product matrix.\\ (Linear)}
    \end{subfigure}
    \
    \begin{subfigure}{0.19\textwidth}
        \includegraphics[width=\linewidth]{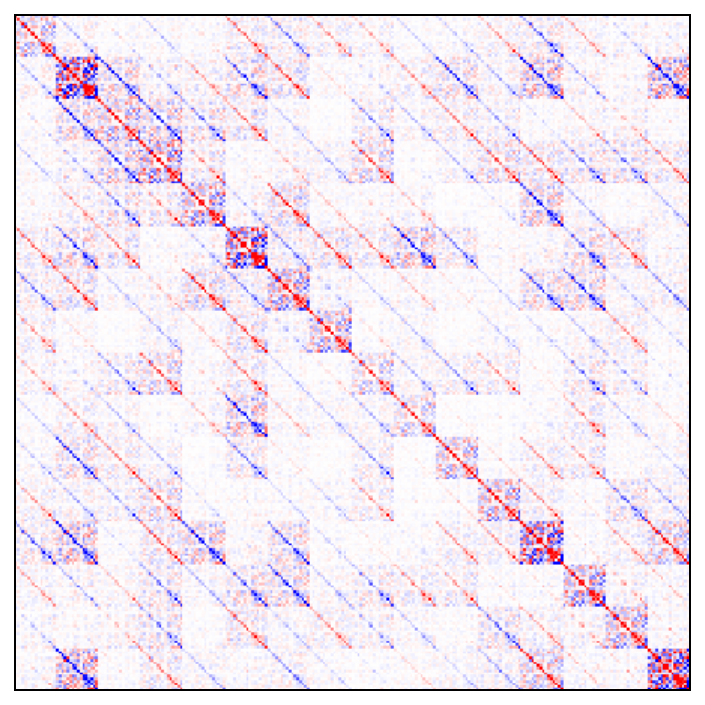}
        \caption{10-layer product matrix.\\ (Linear)}
        \label{fig:cov_heatmap-c}
    \end{subfigure} \
    \begin{subfigure}{0.19\textwidth}
      \includegraphics[width=\textwidth]{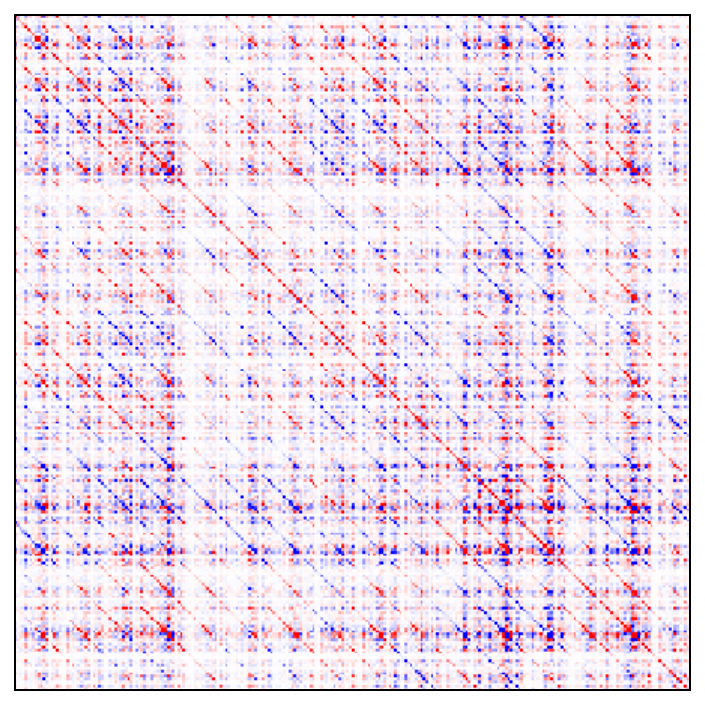}
      \caption{5-layer local product matrix.\\ (Leaky ReLU)}
      \label{fig:cov_heatmap-d}
  \end{subfigure}
  \
  \begin{subfigure}{0.19\textwidth}
      \includegraphics[width=\textwidth]{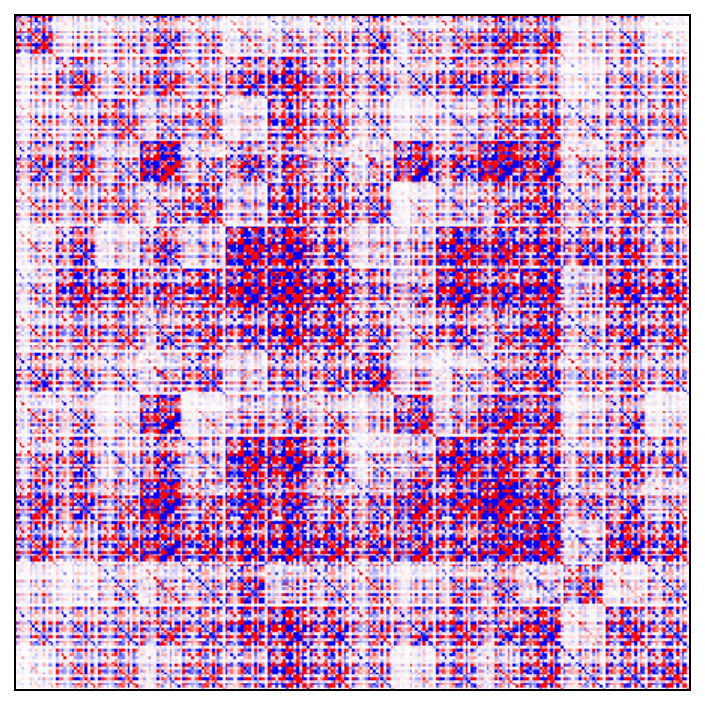}
      \caption{10-layer local product matrix.\\ (Leaky ReLU)}
      \label{fig:cov_heatmap-e}
  \end{subfigure}
  \vspace{3mm}
    \caption{Covariance heatmap for mean-field approximate posteriors trained on FashionMNIST. (a) A single layer has diagonal covariance. (b-c) In a deep linear model the \emph{product matrix} composed of $L$ mean-field weight matrices has off-diagonal covariance induced by the mean-field layers. Redder is more positive, bluer more negative. (d-e) For piecewise non-linear activations we introduce `local product matrices' (defined in \S\ref{piecewise}) with similar covariance. Shared activations introduce extra correlations. This lets us extend results from linear to piecewise-linear neural networks.}
    \label{fig:cov_heatmap}
    \vspace{-2mm}
\end{figure}

  \textbf{How Expressive is the Product Matrix?}
  We show that the Matrix Variate Gaussian (MVG) distribution is a special case of the mean-field product matrix distribution.
  The MVG distribution is used as a structured-covariance approximation by e.g., \citet{louizos_structured_2016}, \citet{zhang_noisy_2018} to approximate the covariance of weight matrices while performing variational inference.\footnote{In some settings, MVG distributions can be indicated by the Kronecker-factored or K-FAC approximation. In MVGs, the covariance between elements of an $n_0 \times n_1$ weight matrix can be described as $\Sigma = V \otimes U$ where $U$ and $V$ are positive definite real scale matrices of shape $n_0 \times n_0$ and $n_1 \times n_1$.}
  We prove in Appendix \ref{a:mvg}:
 
  \begin{restatable}{proposition}{thmmatrixnormal}\label{thm:matnorm}
      The Matrix Variate Gaussian (Kronecker-factored) distribution is a special case of the distribution over elements of the product matrix.
      In particular, for $M^{(3)} = ABC$, $M^{(3)}$ is distributed as an MVG random variable when $A$ and $C$ are deterministic and $B$ has its elements distributed as fully factorized Gaussians with unit variance.
  \end{restatable}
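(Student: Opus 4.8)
The plan is to combine the standard linear-algebraic characterisation of the matrix variate Gaussian with an explicit choice of $A$, $B$, $C$ that realises an arbitrary MVG; the same computation then shows the construction named in the statement always produces an MVG, so the two families coincide. Recall (the convention recorded in the footnote following Theorem~\ref{thm:matnorm}) that an $n_0\times n_1$ random matrix $X$ is matrix-variate normal $\mathcal{MN}(M_0,U,V)$, with $U$ a positive definite $n_0\times n_0$ row-scale matrix and $V$ a positive definite $n_1\times n_1$ column-scale matrix, precisely when $\mathrm{vec}(X)$ is Gaussian with mean $\mathrm{vec}(M_0)$ and covariance $V\otimes U$. Writing $U = AA^{\top}$ and $V = C^{\top}C$ for square invertible matrix square roots (e.g.\ Cholesky factors), this is equivalent to $X \stackrel{d}{=} M_0 + AZC$ with $Z$ an $n_0\times n_1$ matrix of i.i.d.\ standard normals, since $\mathrm{vec}(AZC) = (C^{\top}\otimes A)\mathrm{vec}(Z)$ has covariance $(C^{\top}\otimes A)(C\otimes A^{\top}) = (C^{\top}C)\otimes(AA^{\top}) = V\otimes U$.

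First I would fix an arbitrary target $\mathcal{MN}(M_0,U,V)$, take $A$ and $C$ to be the square roots above --- both deterministic --- and let $B$ be an $n_0\times n_1$ matrix whose entries are mutually independent Gaussians, each of variance $1$, with mean matrix $\mathbb{E}[B]=A^{-1}M_0C^{-1}$ (well defined since $A,C$ are invertible). Then $B$ is fully factorized with unit variance as required, and $M^{(3)}:=ABC$ is a genuine three-layer product matrix whose outer two weight matrices happen to be deterministic (a degenerate mean-field layer). Splitting $B=\mathbb{E}[B]+Z$ with $Z$ i.i.d.\ standard normal gives $M^{(3)} = A\,\mathbb{E}[B]\,C + AZC = M_0 + AZC$; because $A$ and $C$ are constant, $M^{(3)}$ is an \emph{affine} image of the single Gaussian matrix $Z$ and hence jointly Gaussian, so by the characterisation its law is exactly $\mathcal{MN}(M_0,U,V)$. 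Running the argument in reverse --- from any deterministic $A,C$ and any unit-variance factorized $B$ --- shows $M^{(3)}=ABC$ is always matrix-variate normal with $M_0=A\,\mathbb{E}[B]\,C$, $U=AA^{\top}$, $V=C^{\top}C$, completing both inclusions and hence the ``special case'' claim.

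I expect the only real care is bookkeeping: placing the transposes in the mixed-product Kronecker identity so the row-scale attaches to $A$ and the column-scale to $C$, and stating explicitly \emph{why} determinism of $A$ and $C$ matters --- a product of three genuinely random Gaussian matrices is generally non-Gaussian (cf.\ the footnote after Theorem~\ref{lemma:covariance} and Appendix~\ref{a:product_distributions}), whereas here $ABC$ collapses to a linear map of one Gaussian. One minor dimensional point to record: choosing $A$ to be $n_0\times n_0$ and $C$ to be $n_1\times n_1$ forces $B$ to be $n_0\times n_1$, exactly the shape needed for $M^{(3)}$; allowing non-square $A,C$ would only reach MVGs with rank-deficient (hence non-positive-definite) scale matrices, which lie outside the usual definition and are not needed here. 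The remaining moment computations are routine and deferred to Appendix~\ref{a:mvg}.
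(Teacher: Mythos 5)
Your proposal is correct and follows essentially the same route as the paper's Appendix~\ref{a:mvg} proof: the vectorization identity $\mathrm{vec}(ABC)=(C^{\top}\otimes A)\mathrm{vec}(B)$, the mixed-product property giving covariance $(C^{\top}C)\otimes(AA^{\top})$, and the Kronecker characterisation of the MVG. If anything you are slightly more careful than the paper on two points it leaves implicit --- that $ABC$ is an affine image of a single Gaussian matrix and hence genuinely jointly Gaussian (the paper only computes the first two moments before invoking the characterisation), and that realising an arbitrary mean requires $A$ and $C$ to be invertible --- but these are refinements of the same argument, not a different one.
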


  This directly entails in the linear case that:
  \begin{enumerate}
    \item[\bf{Weak Weight Distribution Hypothesis--MVG.}] For any deep linear model with an MVG weight distribution, there exists a deeper linear model with a mean-field weight distribution that induces the same posterior predictive distribution in function-space.
  \end{enumerate} 
 
  \begin{remark}
    This is a \textbf{lower bound} on the expressiveness of the product matrix. We have made \emph{very} strong restrictions on the parameterization of the weights for the sake of an interpretable result. The unconstrained expressiveness of the product matrix covariance given in Appendix \ref{a:derivation} is much greater. Also note, we do not propose using this in practice, it is purely an analysis tool.
  \end{remark}

  \section{Weight Dist. Hypothesis in Deep Piecewise-Linear Mean-Field BNNs}\label{piecewise}
  Neural networks use non-linear activations to increase the flexibility of function approximation.
  On the face of it, these non-linearities make it impossible to consider product matrices.
  In this section we show how to define the \emph{local product matrix}, which is an extension of the product matrix to widely used neural networks with piecewise-linear activation functions like ReLUs or Leaky ReLUs.
  For this we draw inspiration from a proof technique by \citet{shamir_simple_2019} which we extend to stochastic matrices.
  This analytical tool can be used for any stochastic neural network with piecewise linear activations.
  Here, we use it to extend Lemma \ref{lemma:covariance} to neural networks with piecewise-linear activations.

  \textbf{Defining a Local Product Matrix:} Neural nets with piecewise-linear activations induce piecewise-linear functions.
  These piecewise-linear neural network functions define hyperplanes which partition the input domain into regions within which the function is linear.
  Each region can be identified by a sign vector that indicates which activations are `switched on'.
  We show in Appendix \ref{a:local_linearity}:

  \begin{restatable}{lemma}{linear}\label{lemma:linearity}
    Consider an input point $\mathbf{x}^* \in \mathcal{D}$. Consider a realization of the model weights $\thet$. Then, for any $\mathbf{x}^*$, the neural network function $f_{\thet}$ is linear over some compact set $\mathcal{A}_{\thet} \subset \mathcal{D}$ containing $\mathbf{x}^*$. Moreover, $\mathcal{A}_{\thet}$ has non-zero measure for almost all $\mathbf{x}^*$ w.r.t. the Lebesgue measure.
    \end{restatable}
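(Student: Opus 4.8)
The plan is to proceed layer by layer, exploiting the fact that a fixed realization of the weights makes every pre-activation a deterministic function of the input. First I would fix $\thet$ and prove, by induction on the layer index $l$, that $\mathcal{D}$ is covered by finitely many convex polytopes on each of which the map $\mathbf{x} \mapsto h^{(l)}(\mathbf{x})$ — the vector of activations at layer $l$ — is affine. The base case $l=0$ is the identity. For the inductive step, on a region $R$ where $h^{(l-1)}$ is affine the pre-activations $W^{(l)} h^{(l-1)}(\mathbf{x}) + b^{(l)}$ are affine in $\mathbf{x}$; a piecewise-linear activation $\phi$ with breakpoints $t_1 < \dots < t_k$ (for ReLU or Leaky ReLU, $k=1$ and $t_1=0$) subdivides $R$ by the finitely many hyperplanes $\{\mathbf{x} : (W^{(l)} h^{(l-1)}(\mathbf{x}) + b^{(l)})_j = t_m\}$, and on each resulting cell $\phi$ acts through a single linear branch, so $h^{(l)}$ is affine there. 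Finite intersections and unions of polytopes are polytopes, so after $L$ layers $f_{\thet}$ is affine on each of finitely many polytopal cells tiling $\mathcal{D}$.

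I would then take $\mathcal{A}_{\thet}$ to be the closure of a cell whose closure contains $\mathbf{x}^*$, intersected with $\mathcal{D}$ (and, if $\mathcal{D}$ is unbounded, with a closed ball around $\mathbf{x}^*$); this set is closed and bounded, hence compact, it contains $\mathbf{x}^*$, and $f_{\thet}$ is affine — i.e.\ linear up to a constant — on it by the previous paragraph together with continuity of $f_{\thet}$. For the measure claim, observe that the boundaries of all the cells are contained in the \emph{finite} union $\mathcal{H}$ of the hyperplane pieces introduced across the layers; each such piece lies in a hyperplane of $\mathbb{R}^n$ and so has Lebesgue measure zero, whence $\mathcal{H}$ has measure zero. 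For any $\mathbf{x}^* \notin \mathcal{H}$, the point lies in the relative interior of a full-dimensional cell, which therefore has non-empty interior and hence strictly positive Lebesgue measure; since $\mathcal{A}_{\thet}$ contains (the closure of) that cell, $\mathcal{A}_{\thet}$ has non-zero measure, and this holds for almost every $\mathbf{x}^*$.

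The hard part will be the inductive step: the hyperplanes carved out at layer $l$ are not global hyperplanes of $\mathbb{R}^n$ but are affine only within each region fixed by layers $< l$, so one must track that each refinement preserves finiteness of the partition and convexity of the cells. This is exactly the local-linearity bookkeeping of \citet{shamir_simple_2019}, and adapting it to our fixed (but arbitrary) weight realization is routine once the recursion is set up. Two minor points remain: one needs $\mathcal{D}$ (or its relevant portion) bounded so that intersecting a polytope with it stays compact — the ball trick above removes even this assumption — and degenerate weight realizations (e.g.\ a pre-activation identically equal to a breakpoint on a whole cell) only enlarge $\mathcal{H}$ by a measure-zero set, leaving the conclusion intact.
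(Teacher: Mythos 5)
Your proof is correct and follows essentially the same route as the paper's: decompose the input domain into finitely many polyhedral cells on which the network is affine, then note that the cell boundaries form a measure-zero set so that almost every $\mathbf{x}^*$ lies in a full-dimensional cell. The only difference is that you carry out the layer-by-layer induction explicitly, whereas the paper simply cites the region-counting result of \citet{montufar_number_2014} for the finiteness of the partition; your self-contained version is, if anything, slightly more careful about the degenerate and unbounded-domain cases.
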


  Using a set of $N$ realizations of the weight parameters $\Theta = \{\thet_i \text{ for } 1\leq i \leq N\}$ we construct a product matrix within $\mathcal{A} = \bigcap_i \mathcal{A}_{\thet_i}$.
  Since each $f_{\thet_i}$ is linear over $\mathcal{A}$, the activation function can be replaced by a diagonal matrix which multiplies each row of its `input' by a constant that depends on which activations are `switched on' (e.g., 0 or 1 for a ReLU).
  This allows us to compute through matrix multiplication a product matrix of $L$ weight layers $M_{\mathbf{x}^*, \thet_i}^{(L)}$ corresponding to each function realization within $\mathcal{A}$.
  We construct a local product matrix random variate $P_{\mathbf{x}^*}$, for a given $\mathbf{x}^*$, within $\mathcal{A}$, by sampling these $M_{\mathbf{x}^*, \thet_i}^{(L)}$.
  The random variate $P_{\mathbf{x}^*}$ is therefore such that $y$ given $\mathbf{x}^*$ has the same distribution as $P_{\mathbf{x}^*}\mathbf{x}^*$ within $\mathcal{A}$.
  This distribution can be found empirically at a given input point, and resembles the product matrices from linear settings (see Figure \ref{fig:cov_heatmap-d}--\subref{fig:cov_heatmap-e}).

  \textbf{Covariance of the Local Product Matrix: }
  We can examine this local product matrix in order to investigate the covariance between its elements.
  We prove in \ref{a:proof_linearized_product_matrix} that:

  \begin{restatable}{proposition}{maintheorem}
    \label{thm:main}
    Given a mean-field distribution over the weights of neural network $f$ with piecewise linear activations, $f$ can be written in terms of the local product matrix $P_{\mathbf{x}^*}$ within $\mathcal{A}$. 
    
    For $L \geq 3$, for activation functions which are non-zero everywhere, there exists a set of weight matrices $\{W^{(l)}|1 \leq l < L\}$ such that all elements of the local product matrix have non-zero off-diagonal covariance:
    \begin{equation}
      \textup{Cov}(p^{\mathbf{x}^*}_{ab}, p^{\mathbf{x}^*}_{cd}) \neq 0,
    \end{equation}
    where $p^{\mathbf{x}^*}_{ij}$ is the element at the $i$\textsuperscript{th} row and $j$\textsuperscript{th} column of $P_{\mathbf{x}^*}$.
 \end{restatable}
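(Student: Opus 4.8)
The plan is to reduce the piecewise-linear case to the linear case (Lemma~\ref{lemma:covariance}) by choosing the weight distribution so that the network's activation pattern at $\mathbf{x}^*$ is \emph{deterministic almost surely}. By Lemma~\ref{lemma:linearity} and the construction preceding the statement, for any weight realization $\thet$ the map $f_\thet$ is linear on a neighbourhood of $\mathbf{x}^*$; there, each activation $\phi$ acts as a diagonal matrix $D^{(l)}_{\mathbf{x}^*,\thet}$ whose entries are the (piecewise-constant) slopes of $\phi$ selected by the signs of the corresponding pre-activations, so $P_{\mathbf{x}^*} = W^{(L)} D^{(L-1)} W^{(L-1)}\cdots D^{(1)} W^{(1)}$ on $\mathcal{A}$ and $y\mid\mathbf{x}^*$ has the same law as $P_{\mathbf{x}^*}\mathbf{x}^*$. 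The first sentence of the theorem is exactly this bookkeeping; the content is the covariance claim.

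For the covariance claim, I would take each $W^{(l)}$ mean-field with compactly supported coordinates (e.g.\ uniform on small intervals), all means equal to a large positive constant $t$, and small positive variances. For almost every $\mathbf{x}^*$ we have $\sum_j x^*_j \neq 0$, and then every pre-activation encountered while propagating $\mathbf{x}^*$ through the mean network is bounded away from $0$ with a layer-independent sign; shrinking the supports if necessary, the sign of every pre-activation at $\mathbf{x}^*$ is constant over the support of the weights, so there is a single fixed activation pattern $s^*$ with $D^{(l)}_{\mathbf{x}^*,\thet} = D^{(l)}_{s^*}$ almost surely, where $D^{(l)}_{s^*}$ is $I$ or $\alpha I$. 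Since $\phi$ is non-zero everywhere, each $D^{(l)}_{s^*}$ is invertible; absorbing it into the adjacent matrix, $V^{(l)} := D^{(l)}_{s^*} W^{(l)}$ for $l<L$ and $V^{(L)}:=W^{(L)}$, merely rescales the rows of $W^{(l)}$ by fixed non-zero constants, so the $V^{(l)}$ are again mean-field with positive means and finite moments and, almost surely, $P_{\mathbf{x}^*}=V^{(L)}V^{(L-1)}\cdots V^{(1)}$ --- an exact product matrix of $L$ mean-field factors. Freezing $s^*$ also makes the per-sample linear regions coincide, so $\mathcal{A}$ contains a neighbourhood of $\mathbf{x}^*$ and in particular has non-zero measure.

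Now I would invoke the covariance-update rule of Appendix~\ref{a:derivation} --- the same computation that proves Lemma~\ref{lemma:covariance} --- applied to $\prod V^{(l)}$. For $L\geq 3$ and any indices $a,b,c,d$, there exist index-paths contributing to $p^{\mathbf{x}^*}_{ab}$ and to $p^{\mathbf{x}^*}_{cd}$ that share a middle edge, and the corresponding term in $\textup{Cov}(p^{\mathbf{x}^*}_{ab},p^{\mathbf{x}^*}_{cd})$ is, to leading order in $t$, a positive multiple of $t^{2L-2}$ (the top-order contributions cancel but leave a strictly positive coefficient because all factor means are positive); summing the finitely many such contributions, the covariance is non-zero. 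Equivalently, one may quote Lemma~\ref{lemma:covariance} as a black box and note its conclusion survives the restriction to compactly-supported factors, since the product-matrix covariance is a fixed polynomial in the first two moments of the factors, hence continuous in them, so a sufficiently gentle truncation of an unbounded optimal choice perturbs each of the finitely many covariances arbitrarily little.

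I expect the main obstacle to be the second step: cleanly decoupling the randomness of the weights from the randomness of the activation pattern. Once the pattern is forced to be almost surely constant the problem is exactly linear and the Section~\ref{linear} analysis transfers; the care needed is to verify that a pattern-freezing, compactly-supported, mean-field distribution still realizes rich product-matrix covariances (handled by the positive-mean construction or the continuity argument above) and that freezing the pattern does not trivialize $\mathcal{A}$ (it does not, by Lemma~\ref{lemma:linearity}). The non-zero-everywhere hypothesis on $\phi$ enters precisely here: it guarantees the frozen activation matrices are invertible, so absorbing them preserves both the mean-field structure and the full rank the linear argument needs; for a plain ReLU one would instead have to choose $s^*$ so that every unit is active, a strictly stronger (though still achievable) demand.
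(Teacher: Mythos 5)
Your proposal is correct, but it takes a genuinely different route from the paper's. The paper does \emph{not} freeze the activation pattern: it conditions on each weight realization (hence on each activation pattern), observes that the conditional covariance has exactly the form of the linear-case expression from Appendix~\ref{a:derivation} multiplied by constants $\alpha$ coming from the activation slopes, and then argues that since every per-realization contribution is positive (for all-positive means and non-zero $\alpha$, which is where the ``non-zero everywhere'' hypothesis enters), the covariance of the local product matrix --- written as an average of the per-realization covariances --- stays positive. You instead engineer the mean-field distribution (large positive means, small compactly supported variances) so that the pre-activation signs at $\mathbf{x}^*$ are deterministic, absorb the resulting fixed invertible diagonal matrices into the adjacent weight layers, and invoke Lemma~\ref{lemma:covariance} verbatim on an exact product of mean-field factors. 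Your route is arguably the cleaner existence proof: it sidesteps the step the paper glosses over, namely that averaging conditional covariances over a random activation pattern omits the between-pattern term in the law of total covariance (the paper's equation treating $\mathrm{Cov}(P_{ab},P_{cd})$ as $\frac{1}{I}\sum_i \hat{\Sigma}^i_{abcd}$ is really a statement about a mixture, not a sum of independent variables). What the paper's decomposition buys in exchange is that it speaks to generic positive-mean posteriors in which the activation pattern genuinely fluctuates --- the regime its numerical simulations (Figure~\ref{fig:cov_heatmap-d}--\subref{fig:cov_heatmap-e}) actually probe --- whereas your construction exhibits only the degenerate instances where the network is effectively linear at $\mathbf{x}^*$. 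One small caveat on your absorption step: row-rescaling by the frozen slopes preserves positive means only when the slopes are positive (true for Leaky ReLU, and fixable by a sign flip otherwise), which is worth stating since the theorem's hypothesis only demands the slopes be non-zero.
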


 Proposition \ref{thm:main} is weaker than the Weight Distribution Hypothesis.
 Once more, we do not show that all full-covariance weight distributions can be exactly replicated by a deeper factorized network.
 We now have non-linear networks which give richer functions, potentially allowing richer covariance, but the non-linearities have introduced analytical complications.
 However, it illustrates the way in which deep factorized networks can emulate rich covariance in a shallower network.

 \begin{remark}
  Proposition \ref{thm:main} is restricted to activations that are non-zero everywhere although we believe that in practice it will hold for activations that can be zero, like ReLU.
  If the activation can be zero then, for some $\mathbf{x}^*$, enough activations could be `switched off' such that the effective depth is less than three.
  This seems unlikely in a trained network, since it amounts to throwing away most of the network's capacity, but we cannot rule it out theoretically.
  In Appendix \ref{a:proof_linearized_product_matrix} we empirically corroborate that activations are rarely all `switched off' in multiple entire layers.
\end{remark}

 \textbf{Numerical Simulation: }\label{exp_heatmap}
 We confirm empirically that the local product matrix develops complex off-diagonal correlations using a neural networkwith Leaky ReLU activations trained on FashionMNIST using mean-field variational inference.
 We estimate the covariance matrix using 10,000 samples of a trained model (Figure \ref{fig:cov_heatmap-d}--\subref{fig:cov_heatmap-e}).
 Just like in the linear case (Figure \ref{fig:cov_heatmap-a}--\subref{fig:cov_heatmap-c}), as the model gets deeper the induced distribution on the product matrix shows complex off-diagonal covariance.
 There are additional correlations between elements of the product matrix based on which activation pattern is predominantly present at that point in input-space.
 See Appendix \ref{a:heatmap} for further experimental details.

  \section{True Posterior Hypothesis in Two-Hidden-Layer Mean-Field Networks}\label{UAT}
  We prove the True Posterior Hypothesis using the universal approximation theorem (UAT) due to \citet{leshno_multilayer_1993} in a stochastic adaptation by \citet{foong_expressiveness_2020}.
  This shows that a BNN with a mean-field approximate posterior with at least two layers of hidden units can induce a function-space distribution that matches any true posterior distribution over function values arbitrarily closely, given arbitrary width.
  Our proof formalizes and extends a remark by \citet[p23]{gal_uncertainty_2016} concerning multi-modal posterior predictive distributions.

  \begin{restatable}{proposition}{theoremuat}\label{thm:uat}
    Let $p(\mathbf{y}=\mathbf{Y}|\mathbf{x}, \mathcal{D})$ be the probability density function for the posterior predictive distribution of any given multivariate regression function, with $\mathbf{x} \in \mathbb{R}^D$, $\mathbf{y} \in \mathbb{R}^K$, and $\mathbf{Y}$ the posterior predictive random variable.
    Let $f(\cdot)$ be a Bayesian neural network with two hidden layers.
    Let $\hat{\mathbf{Y}}$ be the random vector defined by $f(\mathbf{x})$.
    Then, for any $\epsilon, \delta > 0$, there exists a set of parameters defining the neural network $f$ such that the absolute value of the difference in probability densities for any point is bounded:
    \begin{equation}
      \forall \mathbf{y}, \mathbf{x}, i: \quad \textup{Pr}\left(\abs{p(y_i=\hat{Y}_i) - p(y_i=Y_i|\mathbf{x}, \mathcal{D})} > \epsilon \right) < \delta, \label{eq:simplified_result}
    \end{equation}
    so long as: the activations of $f$ are non-polynomial, non-periodic, and have only zero-measure non-monotonic regions, the first hidden layer has at least $D+1$ units, the second hidden layer has an arbitrarily large number of units, the cumulative density function of the posterior predictive is continuous in output-space, and the probability density function is continous and finite non-zero everywhere.
    Here, the probability bound is with respect to the distribution over a subset of the weights described in the proof, $\thet_{\textup{Pr}}$, while one weight distribution $\thet_{Z}$ remains to induce the random variable $\hat{\mathbf{Y}}$.
  \end{restatable}

  The full proof is provided in Appendix \ref{a:uat}.
  Intuitively, we define a $q(\thet)$ to induce an arbitrary distribution over hidden units in the first layer and using the remaining weights and hidden layer we approximate the inverse cumulative density function of the true posterior predictive by the UAT.
  It follows from Proposition \ref{thm:uat} that it is possible, in principle, to learn a mean-field approximate posterior which induces the true posterior distribution over predictive functions.
  Our proof strengthens a result by \citet{foong_expressiveness_2020} which considers only the first two moments of the posterior predictive.
  
  There are important limitations to this argument to bear in mind.
  First, the UAT might require arbitrarily wide models.
  Second, to achieve arbitrarily small error $\delta$ it is necessary to reduce the weight variance.
  Both of these might result in very low weight-space evidence lower-bounds (ELBOs).
  Third it may be difficult in practice to choose a prior in weight-space that induces the desired prior in function space.
  Fourth, although the distribution in weight space that maximizes the marginal likelihood will also maximize the marginal likelihood in function-space within that model class, the same is not true of the weight-space ELBO and functional ELBO.
  Our proposition therefore does not show that such an approximate posterior will be found by VI.
  We investigate this empirically below.
  
\section{Empirical Validation}\label{evaluation}
  We have already considered the Weight Distribution Hypothesis theoretically and confirmed through numerical simulation that deeper factorized networks can show off-diagonal covariance in the product matrix.
  We now examine the True Posterior Hypothesis empirically from two directions.
  First, we examine the true posterior of the weight distribution using Hamiltonian Monte Carlo and consider the extent to which the mean-field assumption makes it harder to match a mode of the true posterior.
  Second, we work off the assumption that a worse approximation of the true posterior would result in significantly worse performance on downstream tasks, and we show that we are unable to find such a difference in both large- and small-scale settings.
  \begin{figure}
    \begin{subfigure}[b]{0.48\textwidth}
        \centering
        \includegraphics[width=\textwidth]{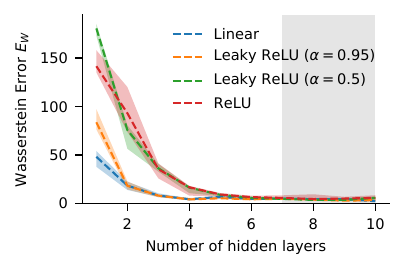}
        \caption{Wasserstein Error.}
        \label{fig:hmc_wasserstein}
    \end{subfigure}
    \hfill
    \begin{subfigure}[b]{0.48\textwidth}
      \centering
        \includegraphics[width=\textwidth]{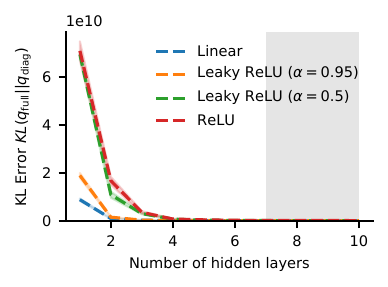}
        \caption{KL-divergence Error.}
        \label{fig:hmc_kl}
    \end{subfigure}
    \caption{For all activations and both error measures, large error in shallow networks almost disappears with depth. All models have $\sim$1,000 parameters. Shaded depths: less reliable HMC samples.}
  \end{figure}
\subsection{Examining the True Posterior with Hamiltonian Monte Carlo}\label{exp_hmc}
  Proposition \ref{thm:uat} proves the True Posterior Hypothesis in sufficiently wide models of two or more layers.
  Here, we examine the true posterior distribution using Hamiltonian Monte Carlo (HMC) and show that even in narrow deep models there are modes of the true posterior that are approximately mean-field.
  We examine a truly full-covariance posterior, not even assuming that layers are independent of each other, unlike \citet{barber_ensemble_1998} and most structured covariance approximations.

  \newcommand{\tp}{p(\thet)}
  \textbf{Methodology:} To approximate the true posterior distribution---$\tp$---we use the No-U-turn HMC sampling method \citep{hoffman_no-u-turn_2014}.
  We then aim to determine how much worse a mean-field approximation to these samples is than a full-covariance one.
  \newcommand{\qf}{\hat{q}_{\textup{full}}(\thet)}
  \newcommand{\qd}{\hat{q}_{\textup{diag}}(\thet)}
  To do this, we fit a full-covariance Gaussian distribution to the samples from the true posterior---$\qf$---and a Gaussian constrained to be fully-factorized---$\qd$.
  We deliberately do not aim to sample from multiple modes of the true posterior---VI is an essentially mode-seeking method.\footnote{In fact, in order to ensure that we do not accidentally capture multiple modes, we use the dominant mode of a mixture of Gaussians model selected using the Bayesian Information Criterion (see Appendix \ref{a:hmc} for details).}
  Each point on the graph represents an average over 20 restarts (over 2.5 million model evaluations per point on the plot).
  For all depths, we adjust the width so that the model has roughly 1,000 parameters and train on the binary classification `two-moons' task.
  We report the sample test accuracies and acceptance rates in Appendix \ref{a:hmc} and provide a full description of the method.
  We consider two measures of distance:
  \begin{compactenum}
    \item \textbf{Wasserstein Distance: } We estimate the $L_2$-Wasserstein distance between samples from the true posterior and each Gaussian approximation. Define the Wasserstein Error:
    \begin{equation}
      E_W = W(\tp, \qd) - W(\tp, \qf).
    \end{equation}  
    If the true posterior is fully factorized, then $ E_W= 0$.
    The more harmful a fully-factorized assumption is to the approximate posterior, the larger $E_W$ will be.
    \item \textbf{KL-divergence: } We estimate the KL-divergence between the two Gaussian approximations. Define the KL Error:
    \begin{equation}
      E_{KL} = \KLdiv{\qf}{\qd}.
    \end{equation}
    This represents a worst-case information loss from using the diagonal Gaussian approximation rather than a full-covariance Gaussian, measured in nats (strictly, the infimum information loss under any possible discretization \citep{gray_entropy_2011}).
    $E_{KL} = 0$ when the mode is naturally diagonal, and is larger the worse the approximation is.
    (Note that we do not directly involve $\tp$ here because KL-divergence does not follow the triangle inequality.)
  \end{compactenum}

  Note that we are only trying to establish how costly the \emph{mean-field} approximation is relative to full covariance, not how costly the \emph{Gaussian} approximation is.

  \textbf{Results: }In Figure \ref{fig:hmc_wasserstein} we find that the Wasserstein Error introduced by the mean-field approximation is large in shallow networks but falls rapidly as the models become deeper. In Figure \ref{fig:hmc_kl} we similarly show that the KL-divergence Error is large for shallow networks but rapidly decreases.
  Although these models are small, this is very direct evidence that there are mean-field modes of the true posterior of a deeper Bayesian neural network.
  
  In all cases, this is true regardless of the activation we consider, or whether we use any activation at all.
  Indeed we find that a non-linear model with a very shallow non-linearity (LeakyReLU with $\alpha=0.95$) behaves very much like a deep linear model, while one with a sharper but still shallow non-linearity ($\alpha=0.5$) behaves much like a ReLU.
  This suggests that the shape of the true posterior modes varies between linear models and non-linear ones more as a matter of degree than kind, suggesting that our analysis in \S\ref{linear} has bearing on the non-linear case.
  
  \subsection{Factorization and Downstream Task Performance}
  Here, we compare the performance of Bayesian neural networks with complex posterior approximations to those with mean-field approximations.
  We show that over a spectrum of model sizes, performance does not seem to be greatly determined by the approximation.

  \paragraph{Depth in Full- and Diagonal-covariance Variational Inference.}
  Training with full-covariance variational inference is intractable, except for very small models, because of optimization difficulties.
  In Figure \ref{fig:iris}, we show the test cross-entropy of small models of varying depths on the Iris dataset from the UCI repository.
  With one layer of hidden units the full-covariance posterior achieves lower cross-entropy.
  For deeper models, however, the mean field network matches the full-covariance one.
  Full details of the experiment can be found in Appendix \ref{a:iris}.

\definecolor{diag}{gray}{0.93}

\begin{figure}\RawFloats
  \begin{minipage}{0.48\textwidth}
    \centering
    \includegraphics[width=\columnwidth]{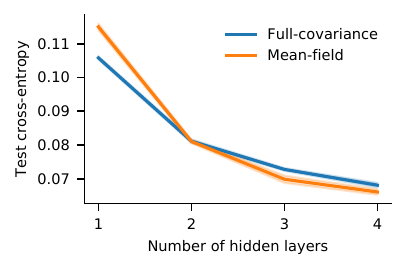}
    \vspace{-6mm}
    \caption{Full- vs diagonal-covariance. After two hidden layers mean-field matches full-covariance. Iris dataset.}
    \label{fig:iris}
    \vspace{-3mm}
  \end{minipage}
  \hfill
  \begin{minipage}{0.48\textwidth}
    \centering
    \includegraphics[width=\textwidth]{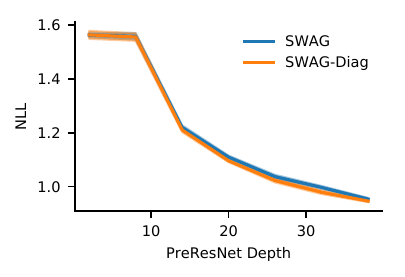}
    \vspace{-6mm}
    \caption{CIFAR-100. Diagonal- and Structured-SWAG show similar log-likelihood in PresNets of varying depth.}\label{fig:swag}
  \end{minipage}
\end{figure}

  \paragraph{Structured- and Diagonal-covariance Uncertainty on CIFAR-100.}
  Although we cannot compute samples from the true posterior in larger models, we attempt an approximate investigation using SWAG \citep{maddox_simple_2019}.
  This involves fitting a Gaussian distribution to approximate Stochastic Gradient-Markov Chain Monte Carlo samples on CIFAR-100.
  SWAG approximates the Gaussian distribution with a low rank empirical covariance matrix, while SWAG-Diag uses a full-factorized Gaussian.
  The resulting distributions are some indication of large-model posterior behaviour, but cannot carry too much weight.
  We show in Figure \ref{fig:swag} that there is no observable difference in negative log-likelihood between the diagonal and low-rank approximation (or accuracy, see Appendix \ref{a:swag}).
  All of the models considered have more than two layers of hidden units (the minimum size of a PresNet).
  This suggests that there is a mode of the true posterior over weights for these deeper models that is sufficiently mean-field that a structured approximation provides little or no benefit.
  It also suggests that past a threshold of two hidden layers, further depth is not essential.

  \begin{table}
      \begin{tabular}{@{}lllccc@{}}
      \toprule
      Architecture                                                    & Method                                                 & Covariance & Accuracy   & NLL  & ECE\\
      \midrule
      \rowcolor[HTML]{EFEFEF} 
      ResNet-18& VOGN$^\ddagger$                                                   & Diagonal & 67.4\% & 1.37 & 0.029\\
      ResNet-18& Noisy K-FAC$^{\dagger\dagger}$& MVG      & 66.4\% & 1.44 & 0.080\\
      \rowcolor[HTML]{EFEFEF} 
      DenseNet-161& SWAG-Diag$^\dagger$& Diagonal & 78.6\% & 0.86 & 0.046\\
      DenseNet-161 & SWAG$^\dagger$                                                   & Low-rank & 78.6\% & 0.83 & 0.020\\
      \rowcolor[HTML]{EFEFEF} 
      ResNet-152   & SWAG-Diag$^\dagger$  & Diagonal & 80.0\% & 0.86 & 0.057\\
      ResNet-152   & SWAG$^\dagger$                                                   & Low-rank & 79.1\% & 0.82 & 0.028\\

        \bottomrule
      \end{tabular}
      \caption{Imagenet diagonal- and structure-covariance methods. Both approximations have similar accuracies, log-likelihoods, and expected calibration errors. Suggests that covariance matters less in large models, as predicted. $^\dagger$ \citep{maddox_simple_2019}. $^\ddagger$ \citep{osawa_practical_2019}. $^{\dagger\dagger}$ \citep{zhang_noisy_2018} as reported by \citet{osawa_practical_2019}.}
      \label{tbl:imagenet}
  \end{table}

  \paragraph{Large-model Mean-field Approximations on Imagenet.}
  The performance of mean-field and structured-covariance methods on large-scale tasks can give some sense of how restrictive the mean-field approximation is.
  Mean-field methods have been shown to perform comparably to structured methods in large scale settings like Imagenet, both in accuracy and measures of uncertainty like log-likelihood and expected calibration error (ECE) (see Table \ref{tbl:imagenet}).
  For VOGN \citep{osawa_practical_2019} which explicitly optimizes for a mean-field variational posterior, the mean-field model is marginally better in all three measures.
  For SWAG, the accuracy is marginally better and log-likelihood and ECE marginally worse for the diagonal approximation.
  This is consistent with the idea that there are some modes of large models that are approximately mean-field (which VOGN searches for but SWAG does not) but that not all modes are.
  These findings offer some evidence that the importance of structured covariance is at least greatly diminished in large-scale models, and may not be worth the additional computational expense and modelling complexity.
  A table with standard deviations and comparison for CIFAR-10 is provided in Appendix \ref{a:large_scale}.

  \section{Discussion}\label{discussion}
  We have argued that deeper models with mean-field approximate posteriors can act like shallower models with much richer approximate posteriors.
  In deep linear models, a product matrix with rich covariance structure is induced by mean-field approximate posterior distributions---in fact, the Matrix Variate Gaussian is a special case of this induced distribution  for at least three weight layers (two layers of hidden units) (\S\ref{linear}).
  We provided a new analytical tool to extend results from linear models to piecewise linear neural networks (e.g., ReLU activations): the local product matrix.
  In addition, examination of the induced covariance in the local product matrix (\S\ref{exp_heatmap}) and posterior samples with HMC (\S\ref{exp_hmc}) suggest that the linear results are informative about non-linear networks.
  
  Moreover, we have proved that neural networks with at least two layers of hidden units and mean-field weight distributions can approximate any posterior distribution over predictive functions.
  In sufficiently deep models, the performance gap between mean-field and structured-covariance approximate posteriors becomes small or non-existent, suggesting that modes of the true posterior in large settings may be approximately mean-field.

  Our work challenges the previously unchallenged assumption that mean-field VI fails because the posterior approximation is too restrictive. 
  Instead, rich posterior approximations and deep architectures are complementary ways to create rich approximate posterior distributions over predictive functions.
  So long as a network has at least two layers of hidden units, increasing the parameterization of the neural network allows some modes of the true posterior over weights to become approximately mean-field.
  This means that approximating posterior functions becomes easier for mean-field variational inference in larger models---making it more important to address other challenges for MFVI at scale.

  \section*{Acknowledgements}
  We would especially like to thank Adam Cobb for his help applying the Hamiltorch package \citep{cobb_introducing_2019}; Angelos Filos for his help with the experiment in Figure \ref{fig:updated-in-between}; and Andrew Foong, David Burt, and Yingzhen Li for their conversations regarding the proof of Proposition 4.
  The authors would further like to thank for their comments and conversations Wendelin Boehmer, David Burt, Adam Cobb, Gregory Farquhar, Andrew Foong, Raza Habib, Andreas Kirsch, Yingzhen Li, Clare Lyle, Michael Hutchinson, Sebastian Ober, Hippolyt Ritter, and Jan Wasilewski as well as anonymous reviewers who have generously contributed their time and expertise.

  This work was supported by the EPSCRC CDTs for Cyber Security and for Autonomous Intelligent Machines and Systems at the University of Oxford.
  It was also supported by the Alan Turing Institute.

  \section*{Broader Impact}
  Our work addresses a growing need for scalable neural network systems that are able to express sensible uncertainty.
  Sensible uncertainty is essential in systems that make important decisions in production settings.
  Despite that, the most performant production systems often rely on large deterministic deep learning models.
  Historically, uncertainty methods have often prioritized  smaller settings where more theoretically rigorous methods could be applied.
  Our work demonstrates the theoretical applicability of cheap uncertainty approximation methods that do not attempt to model complex correlations between weight distributions in those large-scale settings.
  This resolves something the field has assumed is a tension between good uncertainty and powerful models---we show that some modes of the variational weight posterior might be closer to mean-field in bigger models.
  So using a bigger model causes more restrictive approximation methods to become more accurate.

  In principle, this could allow Bayesian neural networks with robust uncertainty to be deployed in a wide range of settings.
  If we are right, this would be a very good thing.
  The main downside risk of our research is that if we are wrong, and people deploy these systems and incorrectly rely on their uncertainty measures, then this could result in accidents caused by overconfidence.
  We therefore recommend being extremely cautious in how a business or administrative decision process depends on any uncertainty measures in critical settings, as is already good practice for non-uncertainty-aware decisions and in non-neural network uncertain machine learning systems.

  \bibliography{references}
  \bibliographystyle{plainnat}
  
  \clearpage
  \appendix
  \section{Introduction to Variational Inference in Bayesian Neural Networks}\label{a:intro}
  A Bayesian neural network (BNN) places a distribution over the weights of a neural network \citep{mackay_practical_1992}.
  We hope to infer the posterior distribution over the weights given the data, $p(\thet|\mathcal{D})$---although ultimately we are interested in a posterior distribution over functions, as described by \citet{sun_functional_2019}.
  Because this is intractable, we seek an approximate posterior $q(\thet)$ to be as close as possible to the posterior over the weights.
  Variational inference (VI) is one method for estimating that approximate posterior, in which we pick an approximating distribution and minimize the KL-divergence between it and the true posterior.
  This KL divergence is the Evidence Lower Bound (ELBO), expressed using the prior over the weight distributions $p(\thet)$.
  We therefore minimize the negative ELBO:
  \begin{align}
    \mathcal{L}_{MFVI} = &\overbrace{\rule{0pt}{3ex} \KLdiv{q(\thet)}{p(\thet)}}^{\text{prior regularization}} - \overbrace{\rule{0pt}{3ex}\E \big[ \log p(y|\mathbf{x},\thet)\big]}^{\text{data likelihood}}. \label{eq:MFVI_unit_prior}
\end{align}

  For the full-covariance Gaussian approximate posterior \citep{barber_ensemble_1998}, the model weights for each layer $\thet_i$ are distributed according to the multivariate Gaussian distribution $\mathcal{N}(\boldsymbol{\mu}_i, \Sigma_i)$ .
  This is already a slight approximation, as it assumes layers are independent of each other. This is important for our analysis in \S\ref{linear} and \S\ref{piecewise}, though note that the experiment in \S\ref{exp_hmc} does not assume any independence between layers.
  
  The mean-field approximation restricts $\Sigma_i$ to be a diagonal covariance matrix, or equivalently assumes that the probability distribution can be expressed as a product of individual weight distributions:
  \begin{equation}
      \mathcal{N}(\boldsymbol{\mu}_i, \Sigma_i) = \prod_j \mathcal{N}(\mu_{ij}, \sigma_{ij}).
  \end{equation}
  The mean-field approximation greatly reduces the computational cost of both forward and backwards propagation, and reduces the number of parameters required to store the model from order $n^2$ in the number of weights to order $n$.
  The implementation of mean-field variational inference which we use is based on \citet{blundell_weight_2015}, who show how to use a stochastic estimator of the ELBO.

  \section{Discussions of Foong et al. 2020}\label{a:foong}
  Our work discusses some similar topics to those discussed by \citet{foong_expressiveness_2020}, in work which was developed in parallel to this paper.
  In particular, they reach a different conclusion as to the value of mean-field variational inference in \emph{deep} BNNs.
  We find their work insightful, but we think it is important to be very precise about where their results do and do not apply, in order to best understand their implications.
  Here, we briefly describe several of their main results, emphasising that our analysis is not in conflict with any of their proofs.

  Empirically, they focus on the posterior distribution over function outputs in small models, where they find that the learned function distributions with mean-field variational inference for regression tend to be overconfident, even in slightly deeper models.
  We also find that in relatively small regression tasks MFVI does not perform particularly well, which aligns with their results.
  However, below, we note that in fact their theoretical results suggest that MFVI may have problems with regression that do not extend to classification tasks.
  Moreover, it is important to note that the largest of their experiments considers data with only 16-dimensional inputs, with only 55 training points and very small models with 4 layers of 50 hidden units.
  In contrast, our work analyses much larger models and datasets, where it is admittedly harder to compare to a reference posterior in function-space.
  However, these are the situations where MFVI would be more typical for deep learning.
  We therefore feel that, while there is significant room for further investigation, on the balance of the current evidence MFVI seems quite appropriate for large-scale deep learning especially in classification tasks.

  \citet{foong_expressiveness_2020} base much of their interpretation on the inability of single-layer mean-field networks to have appropriate `in-between' uncertainty.
  That is, they observe that if a model is trained on data in two regions of input space which are separated, it ought to be able to be significantly more uncertain between those two regions.
  They prove their Theorem 1 which states that the variance of the function expressed by a single-layer mean-field network between two points cannot be greater than the sum of the variances of the function at those points, subject to a number of very important caveats.
Our work is largely concerned with models with more than a single hidden layer, where their theorem does not apply.
Nevertheless, \citet{foong_expressiveness_2020} hypothesize the pathologies that they identify might extend to deeper settings, so we note some further limitations of their proof.
\begin{enumerate}
    \item Theorem 1 applies to single- or multi-output regression models, and to the individual logits in classification models, but makes no predictions about the variance of classification \emph{decisions} (because this depends on the variance of the argmax of the logits).
    \item Theorem 1 is strongest in a 1-dimensional input space. In higher dimensions, \citet{foong_expressiveness_2020} show as a corollary that the `in-between' variance is bounded by the sum of the variances of the hypercube of points including that space. But the number of such points grows exponentially with the dimensionality, meaning that in even only 10 dimensions, the in-between variance could be as much as 1024 times greater than the average edge variance. This means that in high-dimensional input spaces, this bound can be extremely loose and does not neccesarily preclude even single layer models from having significant 'in between' uncertainty.
    \item The line-segments where Theorem 1 applies are not fully general. They must either go through the origin, or be orthogonal to one of the input basis vectors and cross a projection of the origin. This makes their result sensitive to translation and rotation of the input space. However, the authors do provide some empirical evidence that `in between' uncertainty is too low on more general lines in input space for small models.
  \end{enumerate}

  They also prove their Theorem 3, which establishes that deeper mean-field networks do not have the pathologies that apply in the limited single-layer regression settings identified in Theorem 1.
  We consider a similar result (Proposition \ref{thm:uat}) which is more general because it considers more than the first two moments of the distribution.
  Unlike \citet{foong_expressiveness_2020} we see this result as potentially promising that deep mean-field networks can be very expressive.
  This is perhaps because their empirical results suggest that deeper mean-field networks have poor performance in practice.
  This may be because their experiments mostly focus on low-dimensional data with small numbers of datapoints and comparatively small networks, while we consider the larger settings.

  We note also that we find that deeper networks are able to show in-between uncertainty in regression, if not necessarily capture it as fully as something like HMC. In Figure \ref{fig:updated-in-between} we show how increasing depth increases the ability of MFVI neural networks to capture in-between uncertainty even on low-dimensional data.

  Here, each layer has 100 hidden units trained using mean-field variational inference on a sythetic dataset in order to demonstrate the possibility of `in-between' uncertainty.
  Full experimental settings are provided in Table \ref{tbl:hypers-toy}.
  The toy function used is $y = \sin(4(x-4.3)) + \epsilon$ where $\epsilon \sim \mathcal{N}(0, 0.05^2)$.
  We sample 750 points in the interval $-2 \leq x \leq -1.4$ and another 750 points in the interval $ 1.0 \leq x \leq 1.8$.
  We considered a range of temperatures between 0.1 and 100 in order to select the right balance between prior and data.
  Note of course that while our figure in the main body suffices to demonstrate the existence claim that there are deep networks that perform well, of course a single case of a one-layer network performing badly does not show that all one-layer networks perform badly.

  \begin{figure}
    \begin{subfigure}[b]{0.48\textwidth}
      \includegraphics{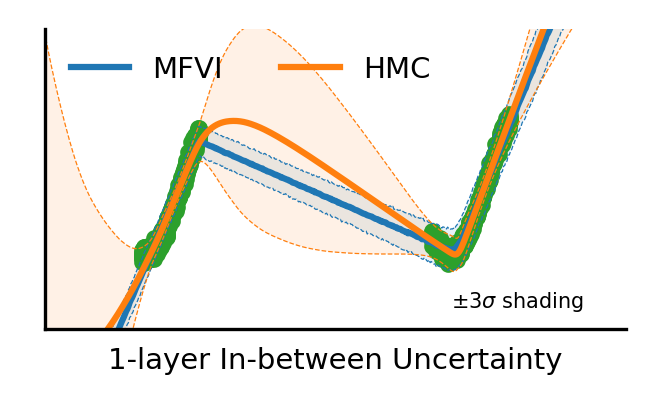}
      \caption{1-layer BNN}
    \end{subfigure}
    \hfill
    \begin{subfigure}[b]{0.48\textwidth}
      \includegraphics{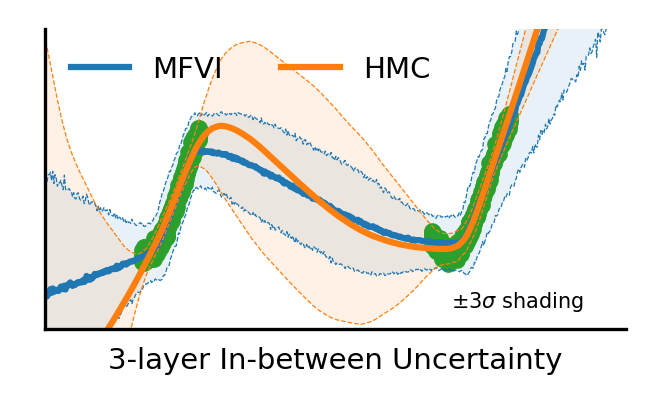}
      \caption{3-layer BNN}
    \end{subfigure}
    \caption{In-between uncertainty. With a single hidden layer, mean-field variational inference cannot capture in-between uncertainty at all in low-dimensional settings. But by adding more layers, we increase the ability to capture in-between uncertainty, though still not to the point of HMC.
    Note, of course, that neural networks are at their most useful with higher dimensional data where in-between uncertainty results are exponentially weaker.}
    \label{fig:updated-in-between}
  \end{figure}

  \backrefsetup{disable}
  \begin{table}[]
    \begin{tabular}{@{}ll@{}}
    \toprule
    Hyperparameter & Setting description                                             \\ \midrule
    Architecture                                     & MLP                                                             \\
    Number of hidden layers                          & 3                                                            \\
    Layer Width                                      & 100                                                              \\
    Activation                                       & Leaky ReLU                          \\
    Approximate Inference Algorithm                  & Mean-field VI (Flipout \citep{wen_flipout_2018})                 \\
    Optimization algorithm                           & Amsgrad \citep{reddi_convergence_2018}       \\
    Learning rate                                    & $10^{-3}$                                      \\
    Batch size                                       & 250                                                              \\
    Variational training samples                     & 1                                                              \\
    Variational test samples                         & 1                                                              \\
    Temperature & 65 \\
    Noise scale & 0.05 \\
    Epochs & 6000\\
    Variational Posterior Initialization               & Tensorflow Probability default                         \\
    Prior                                       & $\mathcal{N}(0, 1.0^2)$                                                               \\
    Dataset                                          & Toy (see text) \\
    Number of training runs & 1 \\
    Number of evaluation runs & 1 \\
    Measures of central tendency & n.a. \\
    Runtime per result & $<5$m\\
    Computing Infrastructure & Nvidia GTX 1060 \\
    \bottomrule
    \end{tabular}
    \caption{Experimental Setting---Toy Regression Visualization. Note that for these visualizations we are purely demonstrating the possibility of in-between uncertainty. As a result, a single training/evaluation run suffices to make an existence claim, so we do not do multiple runs in order to calculate a measure of central tendency.}
    \label{tbl:hypers-toy}
    \end{table}
    \backrefsetup{enable}
  
  \section{Experimental Details}
  \subsection{Full Description of Covariance Visualization}\label{a:heatmap}

  \begin{figure}\RawFloats
    \centering
    \begin{minipage}{0.48\textwidth}
        \centering
        \vspace{-7mm}  
        \includegraphics[width=0.7\columnwidth]{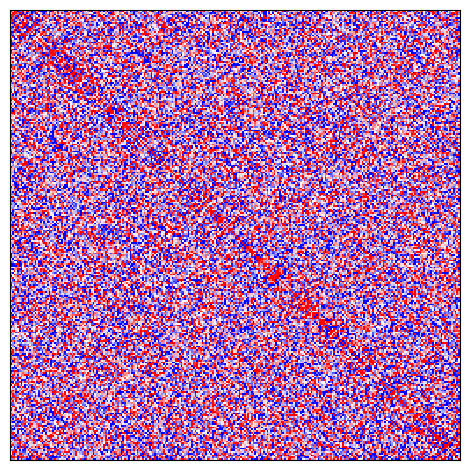}
        \caption{Unlike the covariance matrices of product matrix entries in models trained on real data, a randomly sampled product matrix does not show obvious block structure, though the noise makes it hard to be sure. This model has 5 linear layers.}
        \label{fig:no_block}
    \end{minipage}
    \hfill
    \begin{minipage}{0.48\textwidth}
        \centering
        \vspace{0mm}
        \includegraphics[width=\columnwidth]{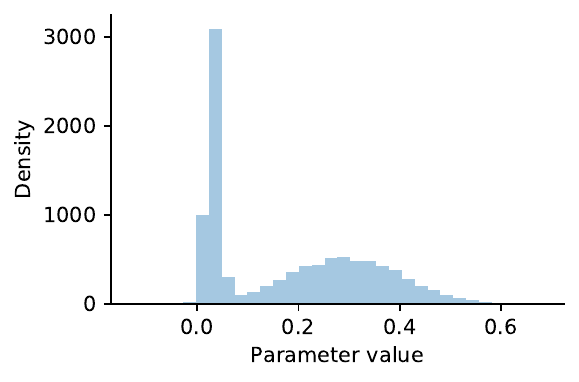}
        \vspace{0mm}
        \caption{The local product matrix also allows the unimodal Gaussian layers to approximate multimodal distributions over product matrix entries.
        Here, we show an example density over a product matrix element, from a three-layer Leaky ReLU model with mean-field Gaussian distributions over each weight trained on FashionMNIST.}
        \label{fig:multi_modal}
    \end{minipage}
\end{figure}

  Here we provide details on the method used to produce Figure \ref{fig:cov_heatmap}.
  The linear version of the visualization is discussed in \S\ref{linear} and the piecewise-linear version is discussed in \S\ref{piecewise}.

  In all cases, we train a neural network using mean-field variational inference in order to visualize the covariance of the product matrix.
  The details of training are provided in Table \ref{tbl:hypers-heatmap}.
  The product matrix is calculated from the weight matrices of an $L$-layer network.
  In the linear case, this is just the matrix product of the $L$ layers.
  In the piecewise-linear case the definition of the product matrix is described in more detail in Appendix \ref{a:lpm_def}.
  All covariances are calculated using 10,000 samples from the converged approximate posterior.
  Note that for $L$ weight matrtices there are $L-1$ layers of hidden units.
  \backrefsetup{disable}
\begin{table}[]
  \begin{tabular}{@{}ll@{}}
  \toprule
  Hyperparameter & Setting description                                             \\ \midrule
  Architecture                                     & MLP                                                             \\
  Number of hidden layers                          & 0-9                                                            \\
  Layer Width                                      & 16                                                              \\
  Activation                                       & Linear or Leaky Relu with $\alpha=0.1$                          \\
  Approximate Inference Algorithm                  & Mean-field Variational Inference                                \\
  Optimization algorithm                           & Amsgrad \citep{reddi_convergence_2018}       \\
  Learning rate                                    & $10^{-3}$                                      \\
  Batch size                                       & 64                                                              \\
  Variational training samples                     & 16                                                              \\
  Variational test samples                         & 16                                                              \\
  Epochs & 10\\
  Variational Posterior Initial Mean               & \citet{he_deep_2016}                         \\
  Variational Posterior Initial Standard Deviation & $\log[1 + e^{-3}]$          \\
  Prior                                       & $\mathcal{N}(0, 0.23^2)$                                                               \\
  Dataset                                          & FashionMNIST \citep{xiao_fashion-mnist_2017} \\
  Preprocessing                                    & Data normalized $\mu=0$, $\sigma=1$                             \\
  Validation Split                                 & 90\% train - 10\% validation\\
  Number of training runs & 1 \\
  Number of evaluation runs & 1 \\
  Measures of central tendency & n.a. \\
  Runtime per result & $<3$m\\
  Computing Infrastructure & Nvidia GTX 1080 \\
  \bottomrule
  \end{tabular}
  \caption{Experimental Setting---Covariance Visualization. Note that for these visualizations we are purely demonstrating the possibility of off-diagonal covariance. As a result, a single training/evaluation run suffices to make an existence claim, so we do not do multiple runs in order to calculate a measure of central tendency.}
  \label{tbl:hypers-heatmap}
  \end{table}
  \backrefsetup{enable}
  
  We compare these learned product matrices, in Figure \ref{fig:no_block}, to a randomly sampled product matrix.
  To do so, we sample weight layers whose entries are distributed normally.
  Each weight is sampled with standard deviation 0.3 and with a mean 0.01 and each weight matrix is 16x16.
  This visualization is with a linear product matrix of 5 layers.

  Further, since researchers often critique a Gaussian approximate posterior because it is unimodal, we confirm empirically that multiple mean-field layers can induce a multi-modal product matrix distribution.
  In Figure \ref{fig:multi_modal} we show a density over an element of the local product matrix from three layers of weights in a Leaky ReLU BNN with $\alpha = 0.1$.
  The induced distribution is multi-modal.
  We visually examined the distributions over 20 randomly chosen entries of this product matrix and found that 12 were multi-modal.
  We found that without the non-linear activation, none of the product matrix entry distributions examined were multimodal, suggesting that the non-linearities in fact play an important role in inducing rich predictive distributions by creating modes corresponding to activated sign patterns.

  \subsection{Effect of Depth Measured on Iris Experimental Settings}\label{a:iris}
  \backrefsetup{disable}
  \begin{table}[]
    \begin{tabular}{@{}ll@{}}
    \toprule
    Hyperparameter & Setting description                                             \\ \midrule
    Architecture                                     & MLP                                                             \\
    Number of hidden layers                          & 1-4                                                            \\
    Layer Width                                      & 4                                                              \\
    Activation                                       & Leaky Relu                          \\
    Approximate Inference Algorithm                  & Variational Inference                                \\
    Optimization algorithm                           & Amsgrad \citep{reddi_convergence_2018}       \\
    Learning rate                                    & $10^{-3}$                                      \\
    Batch size                                       & 16                                                              \\
    Variational training samples                     & 1                                                              \\
    Variational test samples                         & 1                                                              \\
    Epochs & 1000 (early stopping patience=30)\\
    Variational Posterior Initial Mean               & \citet{he_deep_2016}                         \\
    Variational Posterior Initial Standard Deviation & $\log[1 + e^{-6}]$          \\
    Prior                                       & $\mathcal{N}(0, 1.0^2)$                                                               \\
    Dataset                                          & Iris \citep{xiao_fashion-mnist_2017} \\
    Preprocessing                                    & None.                             \\
    Validation Split                                 & 100 train - 50 test\\
    Number of training runs & 100 \\
    Number of evaluation runs & 100 \\
    Measures of central tendency & (See text.) \\
    Runtime per result & $<5$m\\
    Computing Infrastructure & Nvidia GTX 1080 \\
    \bottomrule
    \end{tabular}
    \caption{Experimental Setting---Full Covariance.}
    \label{tbl:hypers-iris}
    \end{table}
    \backrefsetup{enable}
  We describe the full- and diagonal-covariance experiment settings in Table \ref{tbl:hypers-iris}.
  We use a very small model on a small dataset because full-covariance variational inference is unstable, requiring a matrix inversion of a $K^4$ matrix for hidden unit width $K$.
  Unfortunately, for deeper models the initializations still resulted in failed training for some seeds.
  To avoid this issue, we selected the 10 best seeds out of 100 training runs, and report the mean and standard error for these.
  Because we treat full- and diagonal-covariance in the same way, the resulting graph is a fair reflection of their relative best-case merits, but not intended as anything resembling a `real-world' performance benchmark.
  
  Readers may consider the Iris dataset to be unhelpfully small, however this was a necessary choice.
  We note that the small number of training points creates a broad posterior, which is the best-case scenario for a full-covariance approximate posterior.

  \subsection{HMC Experimental Settings}\label{a:hmc}

  \begin{table}[]\RawFloats
    \resizebox{\textwidth}{!}{
    \begin{tabular}{@{}l
    >{\columncolor[HTML]{EFEFEF}}l 
    >{\columncolor[HTML]{EFEFEF}}l ll
    >{\columncolor[HTML]{EFEFEF}}l 
    >{\columncolor[HTML]{EFEFEF}}l ll@{}}
    \toprule
                            & \multicolumn{2}{c}{\cellcolor[HTML]{EFEFEF}ReLU} & \multicolumn{2}{c}{Leaky ReLU 0.5} & \multicolumn{2}{c}{\cellcolor[HTML]{EFEFEF}Leaky ReLU 0.95} & \multicolumn{2}{c}{Linear}      \\ 
    \# Hidden Layers & Test Acc.          & Acceptance         & Test Acc.   & Acceptance  & Test Acc.               & Acceptance               & Test Acc. & Acceptance \\\midrule
    1                       & 99.1\%                 & 84.8\%                  & 98.4\%          & 84.9\%           & 91.9\%                      & 85.4\%                        & 83.9\%        & 78.0\%          \\
    2                       & 99.7\%                 & 77.0\%                  & 99.5\%          & 73.9\%           & 96.4\%                      & 76.3\%                        & 84.2\%        & 44.4\%          \\
    3                       & 99.1\%                 & 58.0\%                  & 99.6\%          & 46.3\%           & 97.2\%                      & 74.5\%                        & 84.4\%        & 37.0\%          \\
    4                       & 99.5\%                 & 62.2\%                  & 99.6\%          & 50.9\%           & 95.8\%                      & 68.2\%                        & 84.4\%        & 43.2\%          \\
    5                       & 98.1\%                 & 61.8\%                  & 99.5\%          & 53.8\%           & 98.4\%                      & 62.4\%                        & 84.3\%        & 35.2\%          \\
    6                       & 95.4\%                 & 78.5\%                  & 99.6\%          & 51.0\%           & 98.0\%                      & 62.6\%                        & 84.1\%        & 33.7\%          \\
    7                       & 92.7\%                 & 68.1\%                  & 99.7\%          & 54.6\%           & 97.5\%                      & 59.7\%                        & 84.0\%        & 33.0\%          \\
    8                       & 87.8\%                 & 68.3\%                  & 99.6\%          & 49.7\%           & 98.0\%                      & 62.5\%                        & 83.8\%        & 36.4\%          \\
    9                       & 80.6\%                 & 73.9\%                  & 99.6\%          & 46.3\%           & 97.4\%                      & 60.2\%                        & 83.9\%        & 36.5\%          \\
    10                      & 74.6\%                 & 74.9\%                  & 99.5\%          & 45.7\%           & 97.1\%                      & 61.8\%                        & 83.8\%        & 40.4\%          \\ \bottomrule
    \end{tabular}}
    \vspace{2mm}
    \caption{HMC samples for ReLU networks are most accurate for smaller numbers of layers, the samples from deeper models may therefore be slightly less reliable. Acceptance rates tend to be with 10-20 percentage points of 65\%, regarded as a good balance of exploration to avoiding unnecessary resampling. The more linear models are less accurate, as one would expect for a dataset that is not linearly separable.}
    \label{tbl:hmc}
\end{table}

  \begin{figure*}\RawFloats
    \begin{minipage}{0.48\textwidth}
        \centering
        \includegraphics[width=\columnwidth]{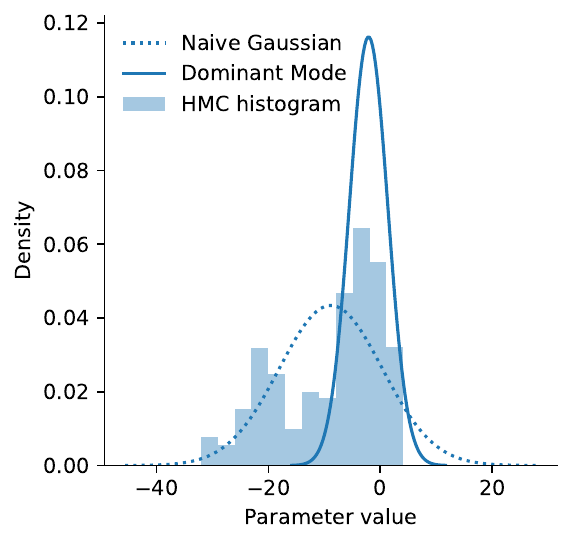}
        \vspace{-6mm}
    \caption{Example density for randomly chosen parameter from a ReLU network with three hidden layers. The HMC histogram is multimodal. If we picked the naive Gaussian fit, we would lie between the modes. By using a mixture model, we select the dominant mode, for which the Gaussian is a better fit.}
    \label{fig:mode_selection}
    \end{minipage}
    \hfill
    \begin{minipage}{0.48\textwidth}
      \includegraphics[width=\textwidth]{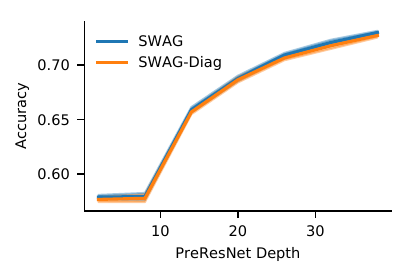}
      \vspace{5mm}
    \caption{CIFAR-100. Accuracy for diagonal and low-rank covariance SWAG. Like log-likelihood, there is no clear difference in performance between these models, all of which are above the depth threshold implied by our work.}\label{fig:swag_acc}
    \end{minipage}
\end{figure*}
  We begin by sampling from the true posterior using HMC.
  
  We use the simple two-dimensional binary classification `make moons' task.\footnote{\url{https://scikit-learn.org/stable/modules/generated/sklearn.datasets.make_moons.html\#sklearn.datasets.make_moons}}
  We use 500 training points (generated using $\mathrm{random\_state=0}$).
  Using \citet{cobb_introducing_2019}, we apply the No-U-turn Sampling scheme \citep{hoffman_no-u-turn_2014} with an initial step size of 0.01.
  We use a burn-in phase with 10,000 steps targetting a rejection rate of 0.8.
  We then sample until we collect 1,000 samples from the true posterior, taking 100 leapfrog steps in between every sample used in order to ensure samples are less correlated.
  For each result, we recalculate the HMC samples 20 times with a different random seed.
  All chains are initialized at the optimum of a mean-field variational inference model in order to help HMC rapidly find a mode of the true posterior.
  We use a prior precision, normalizing constant, and $\tau$ of 1.0.
  The model is designed to have as close to 1000 non-bias parameters each time as possible, adjusting the width given the depth of the model.
  We observe that the accuracies for the ReLU network fall for the deeper models, suggesting that after about 7 layers the posterior estimate may become slightly less reliable (see Table \ref{tbl:hmc}).
  Acceptance rates are broadly in a sensible region for most of the chains.
  
  Using these samples, we find a Gaussian fit.
  For each model we fit a Gaussian mixture model with between 1 and 4 components and pick the one with the best Bayesian information criterion (see Figure \ref{fig:mode_selection}).
  We then find the best diagonal fit to this distribution, which is a Gaussian distribution with the same mean and with a precision matrix equal to the inverse of the diagonal precision of the full-covariance Gaussian.
  We do this because variational inference uses the mode-seeking KL-divergence, so we are interested in the properties of a single Gaussian mode. The overall empirical covariance would lead to a mode-covering distribution, while optimizing the mode-seeking KL to the empirical distribution would of course result in a point-sized distribution centred at one of the HMC samples. Using one mode of a mixture of Gaussians is therefore the closest we can come to finding a single mode of the true posterior of the sort that VI might uncover.
  Note that we are therefore only considering one of the many modes of the true posterior---this is inevitable given the fact that there is a many-to-one correspondence between weight-distributions and function-distributions for neural networks.
  
  Finally, we calculate the KL divergence between these two distributions.
  The graph reports the mean and shading reflects the standard error of the mean, though note that because all runs are initialized from the same point, this underestimates the overall standard error.
  All experiments in this an other sections were run on a desktop workstation with an Nvida 1080 GPU.

  For the Wasserstein divergence, we estimate the distance between the empirical distributions formed by the HMC samples and samples from the full- and diagonal-covariance posterior approximation.
  We used the Python Optimal Transport package to estimate the divergence.
  
  \subsection{Diagonal- and Structured-SWAG at Varying Depths}\label{a:swag}
  We use the implementation of SWAG avaliable publicly at \url{https://github.com/wjmaddox/swa_gaussian}. 
  We adapt their code to vary the depth of the PreResNet architecture for the values ${2, 8, 14, 20, 26, 32, 38}$.
  We use the hyperparameter settings used by the \cite{maddox_simple_2019} for PreResNet154 on CIFAR100 on all datasets to train the models.
  We use 10 seeds to generate the error bars, which are plotted with one standard deviation.
  We use the same SWAG run to fit both the full and diagonal approximations, and use 30 samples in the forward pass.

  \subsection{Large Scale Experiments Descriptions}\label{a:large_scale}
  In Table \ref{tbl:imagenet_full} we show a complete version of Table \ref{tbl:imagenet} including the standard deviations over three runs (except for the Noisy K-FAC result where standard deviation was not provided).
  The standard deviations, of course, underestimate the true variability of the method in question on Imagenet as they only consider difference in random seed with the training configuration otherwise identical.
  Fuller descriptions of the experimental settings used by the authors are provided in the cited papers.

  For CIFAR-10, we show similar results in Table \ref{tbl:cifar10}.
  Here, authors compare a wider range of architectures, which show substantial variation in resulting accuracy.
  However, within the same architecture, there is little evidence of systematic differences between mean-field and structured-covariance methods and any differences which do appear are marginal.
  Note that \citet{zhang_noisy_2018} report difficulty applying batch normalization to mean-field methods, but \citet{osawa_practical_2019} report no difficulties applying batch normalization for their mean-field variant of Noisy Adam.
  For this reason, we report the version of Noisy KFAC run without batch normalization to make it comparable with the results shown for Bayes-by-Backprop (BBB) and Noisy Adam.
  With batch normalization, Noisy KFAC gains some accuracy, reaching 92.0\%, but this seems to be because of the additional regularization, not a property of the approximate posterior family.
  
  \begin{table}[]
    \resizebox{\textwidth}{!}{
    \begin{tabular}{@{}llllll@{}}
    \toprule
    Architecture        & Method      & Covariance      & Accuracy            & NLL               & ECE                 \\ \midrule
    \rowcolor[HTML]{EFEFEF} 
    ResNet-18    & VOGN$^\ddagger$        & Diagonal        & 67.4\% $\pm$ 0.263 & 1.37 $\pm$ 0.010   & 0.029  $\pm$ 0.001  \\
    ResNet-18    & Noisy K-FAC$^{\dagger\dagger}$ & MVG             & 66.4\% $\pm$ n.d.  & 1.44 $\pm$ n.d.   & 0.080  $\pm$ n.d.   \\
    \rowcolor[HTML]{EFEFEF} 
    DenseNet-161 & SWAG-Diag$^{\dagger}$   & Diagonal        & 78.6\% $\pm$  0.000 & 0.86 $\pm$  0.000 & 0.046  $\pm$ 0.000  \\
    DenseNet-161 & SWAG$^{\dagger}$        & Low-rank + Diag & 78.6\% $\pm$  0.000 & 0.83 $\pm$  0.000 & 0.020  $\pm$  0.000 \\
    \rowcolor[HTML]{EFEFEF} 
    ResNet-152   & SWAG-Diag$^{\dagger}$   & Diagonal        & 80.0\% $\pm$  0.000 & 0.86 $\pm$  0.000 & 0.057  $\pm$  0.000 \\
    ResNet-152   & SWAG$^{\dagger}$        & Low-rank + Diag & 79.1\% $\pm$  0.000 & 0.82 $\pm$  0.000 & 0.028  $\pm$  0.000 \\ \bottomrule
    \end{tabular}
    }
    \backrefsetup{disable}
    \caption{Imagenet. Comparison of diagonal-covariance/mean-field (in grey) and structured-covariance methods on Imagenet. The differences on a given architecture between comparable methods is slight.
    $^\dagger$ \citep{maddox_simple_2019}. $^\ddagger$ \citep{osawa_practical_2019}. $^{\dagger\dagger}$ \citep{zhang_noisy_2018} as reported by \citet{osawa_practical_2019}.}
    \backrefsetup{enable}
    \label{tbl:imagenet_full}
  \end{table}

  \begin{table}[]
    \resizebox{\textwidth}{!}{
    \begin{tabular}{@{}llllll@{}}
    \toprule
    Architecture    & Method                                          & Covariance      & Accuracy           & NLL               & ECE               \\ \midrule
    \rowcolor[HTML]{EFEFEF} 
    VGG-16          & SWAG-Diag$^\dagger$                             & Diagonal        & 93.7\% $\pm$ 0.15 & 0.220 $\pm$ 0.008 & 0.027 $\pm$ 0.003 \\
    VGG-16          & SWAG$^\dagger$                                  & Low-rank + Diag & 93.6\% $\pm$ 0.10    & 0.202 $\pm$ 0.003 & 0.016 $\pm$ 0.003 \\
    \rowcolor[HTML]{EFEFEF} 
    VGG-16          & Noisy Adam$^{\ddagger\ddagger}$                 & Diagonal        & 88.2\% $\pm$ n.d.  & n.d.              & n.d.              \\
    \rowcolor[HTML]{EFEFEF} 
    VGG-16          & BBB$^{\ddagger\ddagger}$                        & Diagonal        & 88.3\% $\pm$ n.d.  & n.d.              & n.d.              \\
    VGG-16          & Noisy KFAC$^{\ddagger\ddagger}$                 & MVG             & 89.4\% $\pm$ n.d.  & n.d.              & n.d.              \\
    \rowcolor[HTML]{EFEFEF} 
    PreResNet-164   & SWAG-Diag$^\dagger$                             & Diagonal        & 96.0\% $\pm$ 0.10  & 0.125 $\pm$ 0.003 & 0.008 $\pm$ 0.001 \\
    PreResNet-164   & SWAG$^\dagger$                                  & Low-rank + Diag & 96.0\% $\pm$ 0.02  & 0.123 $\pm$ 0.002 & 0.005 $\pm$ 0.000 \\
    \rowcolor[HTML]{EFEFEF} 
    WideResNet28x10 & SWAG-Diag$^\dagger$                             & Diagonal        & 96.4\% $\pm$ 0.08  & 0.108 $\pm$ 0.001 & 0.005 $\pm$ 0.001 \\
    WideResNet28x10 & SWAG$^\dagger$                                  & Low-rank + Diag & 96.3\% $\pm$ 0.08 & 0.112 $\pm$ 0.001 & 0.009 $\pm$ 0.001 \\
    \rowcolor[HTML]{EFEFEF} 
    ResNet-18       & VOGN$^\ddagger$                                 & Diagonal        & 84.3\% $\pm$ 0.20 & 0.477 $\pm$ 0.006 & 0.040 $\pm$ 0.002 \\
    \rowcolor[HTML]{EFEFEF} 
    AlexNet         & VOGN$\ddagger$ & Diagonal        & 75.5\% $\pm$ 0.48 & 0.703 $\pm$ 0.006 & 0.016 $\pm$ 0.001 \\
     \bottomrule
    \end{tabular}
    }
    \backrefsetup{disable}
    \caption{CIFAR-10. For a given architecture, it does not seem that mean-field (grey) methods systematically perform worse than methods with structured covariance, although there is some difference in the results reported by different authors.
    $^\dagger$ \citep{maddox_simple_2019}. $^\ddagger$ \citep{osawa_practical_2019}. $^{\ddagger\ddagger}$ \citep{zhang_noisy_2018}.}
    \backrefsetup{enable}
    \label{tbl:cifar10}
  \end{table}
  
  \section{Proofs}
  \subsection{Full Derivation of the Product Matrix Covariance}\label{a:derivation}

  \lemgreaterthanzero*

  \begin{proof}
    We begin by explicitly deriving the covariance between elements of the product matrix.

    Consider the product matrix, $M^{(L)}$, which is the matrix product of an arbitrary weight matrix, $W^{(L)}$, with a mean field distribution over it's entries, and the product matrix with one fewer layers, $M^{(L-1)}$.
    Expressed in terms of the elements of each matrix in row-column notation this matrix multiplication can be written:
  \begin{equation}
      m^{(L)}_{ab} = \sum_{i=1}^{K_{L-1}} w^{(L)}_{ai}m^{(L-1)}_{ib}.\label{eq:product_matrix_definition}
  \end{equation}
  We make no assumption about $K_{L-1}$ except that it is non-zero and hence the weights can be any rectangular matrix.\footnote{We set aside bias parameters, as they complicate the algebra, but adding them only strengthens the result because each bias term affects an entire row.}
  The weight matrix $W^{(L)}$ is assumed to have a mean-field distribution (the covariance matrix is zero for all off diagonal elements) with arbitrary means:
  \begin{align}\label{eq:mean_field}
       \mathrm{Cov}\big(w^{(L)}_{ac}, w^{(L)}_{bd}\big) &= \Sigma^{(L)}_{abcd} = \delta_{ac}\delta_{bd}\sigma^{(L)}_{ab}; \nonumber\\
       \E{w^{(L)}}_{ab} &= \mu_{ab}^{(L)}.
  \end{align}
  $\delta$ are the Kronecker delta. Note that the weight matrix is 2-dimensional, but the covariance matrix is defined between every element of the weight matrix.
  While it can be helpful to regard it as 2-dimensional also, we index it with the four indices that define a pair of elements of the weight matrix.
  
  We begin by deriving the expression for the covariance of the $L$-layer product matrix $\mathrm{Cov}(m^{(L)}_{ab}, m^{(L)}_{cd})$.
  Using the definition of the product matrix in equation (\ref{eq:product_matrix_definition}):
  \begin{align}
      \mathrm{Cov}\big(m^{(L)}_{ab}, m^{(L)}_{cd}\big) &= \mathrm{Cov}\big(\sum_{i} w^{(L)}_{ai}m^{(L-1)}_{ib}, \sum_{j} w^{(L)}_{cj}m^{(L-1)}_{jd}\big).
  \end{align}
  We then simplify this using the linearity of covariance (for brevity call the covariance of the product matrix $\hat{\Sigma}^{(L)}_{abcd}$):
  \begin{align}
      \hat{\Sigma}^{(L)}_{abcd} &= \sum_{ij}\mathrm{Cov}\big(w^{(L)}_{ai}m^{(L-1)}_{ib}, w^{(L)}_{cj}m^{(L-1)}_{jd}\big),
      \intertext{rewriting using the definition of covariance in terms of a difference of expectations:}
       &= \sum_{ij} \E{\big[w^{(L)}_{ai}m^{(L-1)}_{ib}w^{(L)}_{cj}m^{(L-1)}_{jd}\big]} - \E{\big[w^{(L)}_{ai}m^{(L-1)}_{ib}\big]}\E{\big[w^{(L)}_{cj}m^{(L-1)}_{jd}\big]},
    \intertext{using the fact that by assumption the new layer is independent of the previous product matrix:}
       &= \sum_{ij} \E{\big[w^{(L)}_{ai}w^{(L)}_{cj}\big]}\E{\big[m^{(L-1)}_{ib}m^{(L-1)}_{jd}\big]} - \E{\big[w^{(L)}_{ai}\big]}\E{\big[w^{(L)}_{cj}\big]}\E{\big[m^{(L-1)}_{ib}\big]}\E{\big[m^{(L-1)}_{jd}\big]},
    \intertext{and rewriting to expose the dependence on the covariance of $M^{(L-1)}$:}
      &= \sum_{ij} \Big(\E{\big[w^{(L)}_{ai}w^{(L)}_{cj}\big]} - \E{\big[w^{(L)}_{ai}\big]}\E{\big[w^{(L)}_{cj}\big]}\Big) \nonumber\\ 
      & \qquad\qquad\cdot \Big(\E{\big[m^{(L-1)}_{ib}m^{(L-1)}_{jd}\big]}- \E{\big[m^{(L-1)}_{ib}\big]}\E{\big[m^{(L-1)}_{jd}\big]}\Big)\nonumber\\
      & \qquad +\E{\big[w^{(L)}_{ai}\big]}\E{\big[w^{(L)}_{cj}\big]}\Big(\E{\big[m^{(L-1)}_{ib}m^{(L-1)}_{jd}\big]} -\E{\big[m^{(L-1)}_{ib}\big]}\E{\big[m^{(L-1)}_{jd}\big]}\Big) \nonumber \\
      & \qquad +\E{\big[m^{(L-1)}_{ib}\big]}\E{\big[m^{(L-1)}_{jd}\big]} \Big(\E{\big[w^{(L)}_{ai}w^{(L)}_{cj}\big]} - \E{\big[w^{(L)}_{ai}\big]}\E{\big[w^{(L)}_{cj}\big]}\Big),
      \intertext{substituting the covariance:}
      &=\sum_{ij} \mathrm{Cov}\Big(w^{(L)}_{ai}, w^{(L)}_{cj}\Big)\cdot \mathrm{Cov}\Big(m^{(L-1)}_{ib}, m^{(L-1)}_{jd}\Big) \nonumber \\
      & \qquad+\E{\big[w^{(L)}_{ai}\big]}\E{\big[w^{(L)}_{cj}\big]}\mathrm{Cov}\Big(m^{(L-1)}_{ib}, m^{(L-1)}_{jd}\Big) \nonumber \\
      & \qquad+ \E{\big[m^{(L-1)}_{ib}\big]}\E{\big[m^{(L-1)}_{jd}\big]} \mathrm{Cov}\Big(w^{(L)}_{ai},w^{(L)}_{cj}\Big). \label{eq:recursive}
  \end{align}
  This gives us a recursive expression for the covariance of the product matrix.
  
  It is straightforward to substitute in our expressions for mean and variance in a mean-field network provided in equation (\ref{eq:mean_field}), where we use the fact that the initial $M^{(1)}$ product matrix is just a single mean-field layer.
  
  In this way, we show that:
  \begin{align}
      \hat{\Sigma}^{(2)}_{abcd} &= \sum_{ij} \Big(\delta_{ac}\delta_{ij}\sigma^{(2)}_{ai}\Big)\cdot \Big(\delta_{ij}\delta_{bd}\sigma^{(1)}_{ib}\Big) \nonumber \\
      & \phantom{= \sum_{ij} }+\mu^{(2)}_{ai}\mu^{(2)}_{cj}\Big(\delta_{ij}\delta_{bd}\sigma^{(1)}_{ib}\Big)+ \E{\big[m^{(1)}_{ib}\big]}\E{\big[m^{(1)}_{jd}\big]} \Big(\delta_{ac}\delta_{ij}\sigma^{(2)}_{ai}\Big)\\
      & = \sum_{i} \delta_{ac}\delta_{bd}\sigma^{(2)}_{ai}\sigma^{(1)}_{ib}+\delta_{bd}\mu^{(2)}_{ai}\mu^{(2)}_{ci}\sigma^{(1)}_{ib} + \delta_{ac}\mu^{(1)}_{ib}\mu^{(1)}_{id}\sigma^{(2)}_{ai}. \label{eq:two_layer}
  \end{align}
  The first term of equation (\ref{eq:two_layer}) has the Kronecker deltas $\delta_{ac}\delta_{bd}$ meaning that it contains diagonal entries in the covariance matrix.
  The second term has only $\delta_{bd}$ meaning it contains entries for the covariance between weights that share a column.
  The third term has only $\delta_{ac}$ meaning it contains entries for the covariance between weights that share a row.
  
  This covariance of the product matrix already has some off-diagonal terms, but it does not yet contain non-zero covariance for weights that share neither a row nor a column.
  
  But we can repeat the process and find $\hat{\Sigma}^{(3)}_{abcd}$ using equation (\ref{eq:recursive}) and our expression for $\hat{\Sigma}^{(2)}_{ibjd}$:
  \begin{align}
      \hat{\Sigma}^{(3)}_{abcd} &= \sum_{ij} \Big(\delta_{ac}\delta_{ij}\sigma^{(3)}_{ai}\Big)\cdot \hat{\Sigma}^{(2)}_{ibjd}+\mu^{(3)}_{ai}\mu^{(3)}_{cj}\hat{\Sigma}^{(2)}_{ibjd} + \E{\big[m^{(2)}_{ib}\big]}\E{\big[m^{(2)}_{jd}\big]} \Big(\delta_{ac}\delta_{ij}\sigma^{(3)}_{ai}\Big)\\
      &= \sum_{ij} \Big(\delta_{ac}\delta_{ij}\sigma^{(3)}_{ai}\Big)\cdot \sum_k \Big(\delta_{ij}\delta_{bd}\sigma^{(2)}_{ik}\sigma^{(1)}_{kb} + \delta_{bd}\mu^{(2)}_{ik}\mu^{(2)}_{jk}\sigma^{(1)}_{kb} \nonumber + \delta_{ij}\mu^{(1)}_{kb}\mu^{(1)}_{kd}\sigma^{(2)}_{ik}\Big) \nonumber \\
      & \qquad+\mu^{(3)}_{ai}\mu^{(3)}_{cj}\cdot \sum_k \Big(\delta_{ij}\delta_{bd}\sigma^{(2)}_{ik}\sigma^{(1)}_{kb} + \delta_{bd}\mu^{(2)}_{ik}\mu^{(2)}_{jk}\sigma^{(1)}_{kb} + \red{\delta_{ij}\mu^{(1)}_{kb}\mu^{(1)}_{kd}\sigma^{(2)}_{ik}}\Big) \nonumber \\
      & \qquad+ \mu^{(2)}_{ib}\mu^{(2)}_{jd} \Big(\delta_{ac}\delta_{ij}\sigma^{(3)}_{ai}\Big).
  \end{align}
  It is the term in red which has no factors of Kronecker deltas in any of the indices a, b, c, or d.
  It is therefore present in all elements of the covariance matrix of the product matrix, regardless of whether they share one or both index.
  This shows that, so long as the distributional parameters themselves are non-zero, the product matrix can have a fully non-zero covariance matrix for $L = 3$.

  We note that there are many weight matrices for which the resulting covariance is non-zero everywhere---we think this is actually typical.
  Indeed, empirically, we found that for any network we cared to construct, we were unable to find covariances that \emph{were} zero anywhere.
  However, for our existance proof, we simply note that for any matrix in which all the means are positive each term of the resulting expression is positive (the standard deviation parameters may be taken as positive without loss of generality).
  In that case, it is impossible that any term cancels with any other, so the resulting covariance is positive everywhere.

  Last, we examine the recurrence relationship in equation \ref{eq:recursive}.
  Once $\mathrm{Cov}(m_{ib}^{(L-1)}, m_{jd}^{(L-1)}) \neq 0$ for all possible indices, the covariances between elements of $M^{(L)}$ may also be non-zero.
  Observe simply that if the means of the top weight matrix are positive, then each of the terms in equation \ref{eq:recursive} are positive, so it is impossible for any term to cancel out with any other.
  The fact that the elements of $M^{(L-1)}$ have non-zero covariances everywhere therefore entails that there is a weight matrix $W^{(L)}$ such that $M^{(L)}$ has non-zero covariance between all of its elements also, as required.

  \begin{remark}
    Here, we show only an existance proof, and therefore we restrict ourselves to positive means and standard deviations to simplify the proof. In fact, we believe that non-zero covariance is the norm, rather than a special case, and found this in all our numerical simulations for both trained and randomly sampled models.
    However, we do not believe that (in the linear case) \emph{any} covariance matrix can be created from a deep mean-field product.
  \end{remark}
  \end{proof}

  \subsection{Matrix Variate Gaussian as a Special Case of Three-Layer Product Matrix}\label{a:mvg}

  We can gain insight into the richness of the possible covariances by considering the limited case of the product matrix $M^{(3)} = ABC$ where $B$ is a matrix whose elements are independent Gaussian random variables and $A$ and $C$ are deterministic.
  We note that this is a highly constrained setting, and that the covariances which can be induced with $A$ and $C$ as random variables have the more complex form shown in \ref{a:derivation}.
  We can show the following:

\thmmatrixnormal*

\newcommand{\flatten}[1]{\mathrm{vec}(#1)} 

\begin{proof}
  Consider the product matrix $M^{(3)} = ABC$. where $B$ is a matrix whose elements are independent Gaussian random variables and $A$ and $C$ are deterministic.
  The elements of $B$ are distributed with mean $\mu_B$ and have a diagonal covariance matrix $\Sigma_B$.

	We begin by recalling the property of the Kronecker product that:
	\begin{align}
		\flatten{ABC} = (C^\top \otimes A) \flatten{B}. \label{eq:kvec}
	\end{align}
  By definition $\flatten{M^{(3)}} = \flatten{ABC} = (C^\top \otimes A) \flatten{B}$.
  Because $C^\top \otimes A$ is deterministic, it follows from a basic property of the covariance that the covariance of the product matrix $\Sigma_{M^{(3)}}$ is given by:
	\begin{align}
    \Sigma_{M^{(3)}} = (C^\top \otimes A) \Sigma_B (C^\top \otimes A)^\top.
	\end{align}
	Using the fact that the transpose is distributive over the Kronecker product, this is equivalent to:
	\begin{align}
    \Sigma_{M^{(3)}} = (C^\top \otimes A) \Sigma_B (C \otimes A^\top).
	\end{align}
	
  Because we only want to establish that the family of distributions expressible contains the matrix variate Gaussians, we do not need to use all the possible freedom, and we can set $\Sigma_B = I$.
  In this special case:
	\begin{align}
    \Sigma_{M^{(3)}} = (C^\top \otimes A) (C \otimes A^\top).
	\end{align}
	Using the mixed-product property, this is equivalent to:
	\begin{align}
    \Sigma_{M^{(3)}} = (C^\top C) \otimes (A A^\top).
	\end{align}	
	Now, we note that any positive semi-definite matrix can be written in the form $A = M^\top M$, so this implies that, defining the positive semi-definite matrices $V = C^\top C$ and $U = A A^\top$, we have that the covariance $\Sigma_{M^{(3)}}$ is of the form,
	\begin{align}
    \Sigma_{M^{(3)}} = V \otimes U. \label{eq:covariance}
  \end{align}
  
  Similarly, we can consider the mean of the product matrix $\mu_{M^{(3)}}$.
  From equation \ref{eq:kvec}, we can see that:
  \begin{align}
    \mu_{M^{(3)}} = (C^\top \otimes A) \flatten{\mu_B}. \label{eq:mean}
  \end{align}

  But since we have not yet constrained $\mu_B$, it is clear that this allows us to set any $\mu_{M^{(3)}}$ we desire by choosing $\mu_B = (C^\top \otimes A)^{-1} \mu_{M^{(3)}}$.

  So far, we have only discussed the first- and second-moments, and the proof has made no assumptions about specific distributions.
  However, we now observe that a random variable $X$ is distributed according to the Matrix Variate Gaussian distribution according to some mean $\mu_X$ and with scale matrices $U$ and $V$ if and only if $\flatten{X}$ is a multivariate Gaussian with mean $\vec{\mu_X}$ and covariance $U \otimes V$.
  
  Therefore, given equations (\ref{eq:mean}) and (\ref{eq:covariance}), the special case of $M^{(3)}$ where the first and last matrices are deterministic and the middle layer has a fully-factorized Gaussian distribution over the weights with unit variance is a Matrix Variate Gaussian distribution where:
  \begin{align}
    &\flatten{\mu_X} = (C^\top \otimes A) \flatten{\mu_B};\\
    & V = C^\top C;\\
    & U = A^\top A.
  \end{align}
  \end{proof}

  \begin{wrapfigure}[23]{R}{0.48\textwidth}
    \RawFloats
    \centering
          \includegraphics[width=\textwidth]{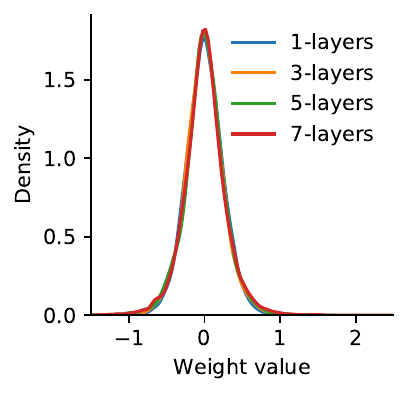}
          \caption{Density over arbitrary element of product matrix for $L$ diagonal prior Gaussian weight matrices whose elements are i.i.d. $\mathcal{N}(0, 0.23^2)$. Product matrix elements are not strictly Gaussian, but very close.}
          \label{fig:product_of_gaussians}
  \end{wrapfigure}
  \subsection{Distribution of the Product Matrix}\label{a:product_distributions}

  In general the probability density function of a product of random variables is not the product of their density functions.
  In the scalar case, the product of two independent Gaussian distributions is a generalized $\chi^2$ distribution.
  The product of arbitrarily many Gaussians with arbitrary non-i.i.d. mean and variance is difficult to calculate (special cases are much better understood e.g., \citet{springer_distribution_1970}).
  An example of a distribution family that \emph{is} closed under multiplication is the log-normal distribution.
  
  In the case of matrix multiplication, important for neural network weights, because each element of a product of matrix multiplication is the sum of the product of individual elements we would ideally like a distribution to be closed under both addition and multiplication (such as the Generalized Gamma convolution \citep{bondesson_class_2015}) but these are not practical.
  
  Instead, it would be helpful if we could make use of a simple distribution like the Gaussian but maintain roughly similar distributions over product matrix elements as the network becomes deeper.
  For only one layer of hidden units, provided K is sufficiently large, we can use the central limit theorem to show that the elements of the product matrix composed of i.i.d. Gaussian priors tends to a Gaussian as the width of the hidden layer increases.
  For two or more layers, however, the central limit theorem fails because the elements of the product matrix are no longer independent.
  However, even though the resulting product matrix is not a Gaussian, we show through numerical simulation that products of matrices with individual weights distributed as $\mathcal{N}(0, 0.23^2)$ have roughly the same distribution over their weights.
  This, combined with the fact that our choice of Gaussian distributions over weights was somewhat arbitrary in the first place, might reassure us that the increase in depth does not change the model prior in an important way.
  In Figure \ref{fig:product_of_gaussians} we plot the probability density function of an arbitrarily chosen entry in the product matrix with varying depths of diagonal Gaussian prior weights.
  The p.d.f. for 7 layers is approximately the same as the single-layer Gaussian distribution with variance $0.23^2$.

  \subsection{Proof of Linearized Product Matrix Covariance}\label{a:proof_linearized_product_matrix}
  \subsubsection{Proof of Local Linearity}\label{a:local_linearity}
  We consider local linearity in the case of piecewise-linear activations like ReLU.
  \linear*
  \begin{proof}
  Neural networks with finitely many piecewise-linear activations are themselves piecewise-linear.
  Therefore, for a finite neural network, we can decompose the input domain $\mathcal{D}$ into regions $\mathcal{D}_i \subseteq \mathcal{D}$ such that
  \begin{enumerate}
      \item $\cup \mathcal{D}_i = \mathcal{D}$,
      \item $\mathcal{D}_i \cap \mathcal{D}_j = \varnothing \quad \forall i\neq j$,
      \item $f_{\thet}$ is a linear function on points in $\mathcal{D}_i$ for each i.
  \end{enumerate}
  
  For a finite neural network, there are at most finitely many regions $\mathcal{D}_i$.
  In particular, with hidden layer widths $n_i$ in the $i$'th layer, with an input domain $\mathcal{D}$ with dimension $n_0$, \citet{montufar_number_2014} show that the network can define maximally a number of regions in input space bounded above by:
  \begin{equation}
      \left( \prod_{i=1}^{L-1} \left\lfloor \frac{n_i}{n_0}
            \right\rfloor^{n_0} \right)
            \sum_{j=0}^{n_0} { n_L \choose j }. 
  \end{equation}
  Except in the trivial case where the input domain has measure zero, this along with (1) and (2) jointly entail that at least one of the regions $\mathcal{D}_i$ has non-zero measure.
  This, with (3) entails that only a set of input points of zero measure do \emph{not} fall in a linear region of non-zero measure.
  These points correspond to inputs that lie directly on the inflection points of the ReLU activations.
  \end{proof}

  We visualize $\mathcal{A}_{\thet_i}$ in Figure \ref{fig:linear_regions_1_sample}.
  This shows a two-dimensional input space (from the two moons dataset).
  Parts of the space within which a neural network function is linear are shown in one color.
  The regions are typically smallest where the most detail is required in the trained function.
  
  \begin{figure}
    \centering
    \begin{subfigure}{0.48\textwidth}
      \includegraphics[width=\textwidth]{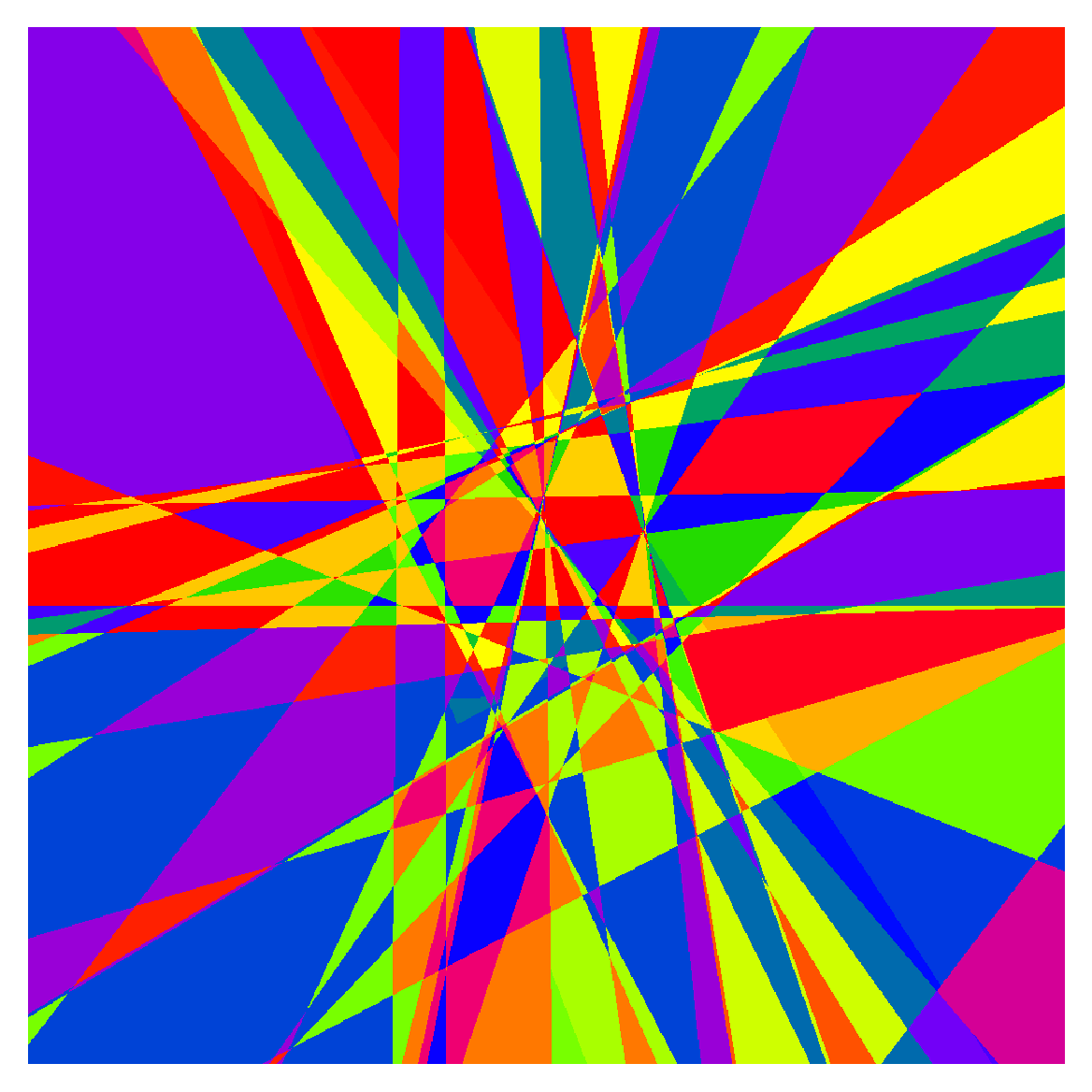}
      \caption{1 sample: $\mathcal{A}_{\thet_i}$}
      \label{fig:linear_regions_1_sample}
    \end{subfigure}
    \begin{subfigure}{0.48\textwidth}
      \includegraphics[width=\textwidth]{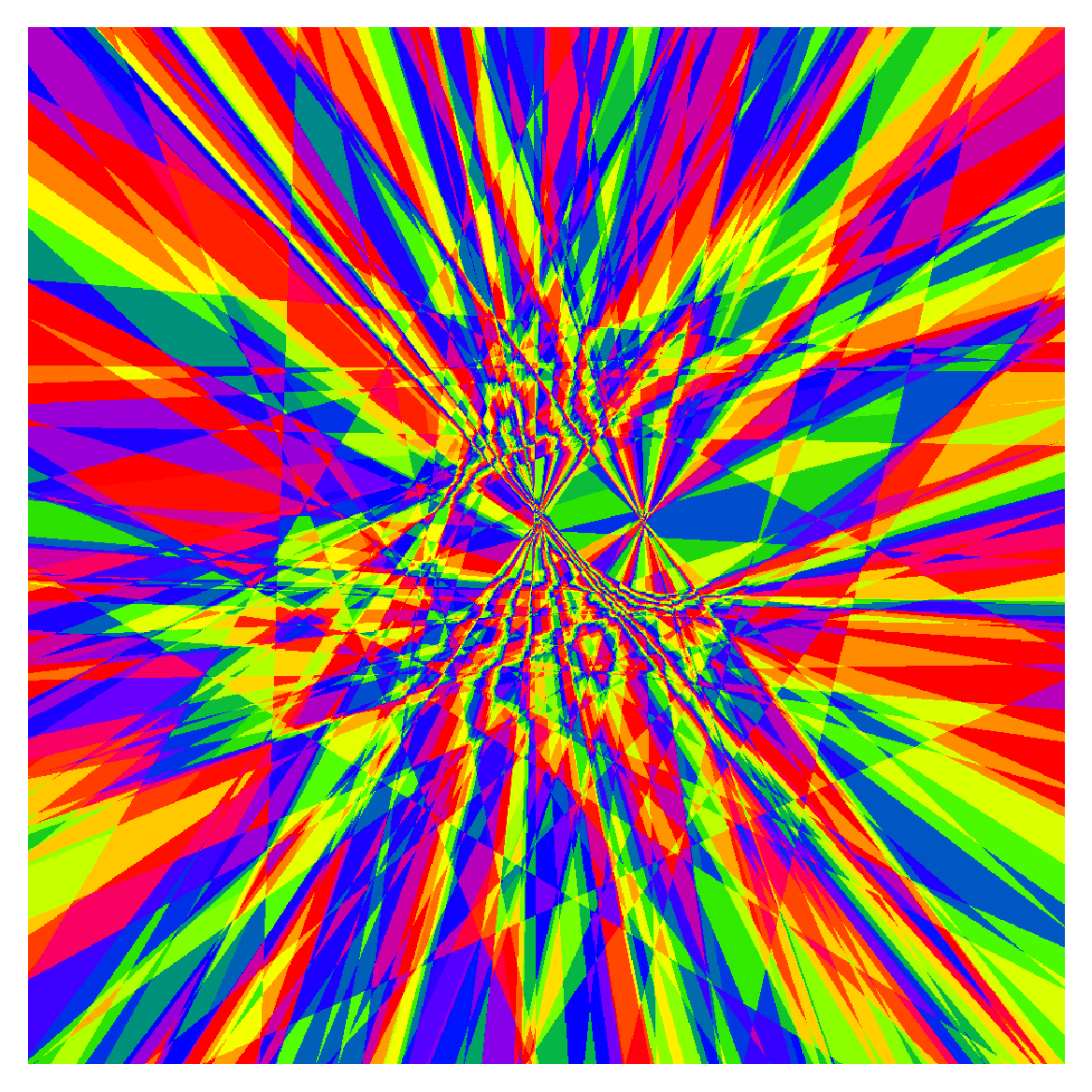}
      \caption{5 samples: $\mathcal{A} = \bigcap_{0 \leq i < 5}\mathcal{A}_{\thet_i}$}
      \label{fig:linear_regions_5_sample}
    \end{subfigure}
    \caption{Visualizaton of the linear regions in input-space for a two-dimensional binary classification problem (two moons). Colored regions show contiguous areas within which a neural network function is linear. We use an abitrary numerical encoding of these regions (we interpret the sign pattern of activated relus as an integer in base 2) and a cylic colour scheme for visualisation, so the color of each region is arbitrary, and two non-contiguous regions with the same color are not the same region. The neural network has one hidden layer with 100 units and is trained for 1000 epochs on 500 datapoints from scipy's two moons using Adam. (a) a single model has fairly large linear regions, with the most detail clustered near the region of interest. (b) The regions within which all samples are linear (the intersection set $\mathcal{A}$) are smaller, but finite. The local product matrix is valid within one of these regions for any input point.}
    \label{fig:linear_regions}
    
  \end{figure}

  \subsubsection{Defining the Local Product Matrix}\label{a:lpm_def}
  We define a random variate representing the local product matrix, for an input point $\mathbf{x}^*$, using the following procedure.

  To draw a finite $N$ samples of the random variate, we sample $N$ realizations of the weight parameters $\Theta = \{\thet_i \text{ for } 1\leq i \leq N\}$.
  For each $\thet_i$, given $\mathbf{x}^*$ there is a compact set $\mathcal{A}_{\thet_i} \subset \mathcal{D}$ within which $f_{\thet_i}$ is linear (and $\mathbf{x}^* \in \mathcal{A}_i$) by lemma \ref{lemma:linearity}.
  Therefore, all samples of the neural network function are linear in the intersection region $\mathcal{A} = \bigcap_i \mathcal{A}_{\thet_i}$.
  We note that $\mathcal{A}$ at least contains $\mathbf{x}^*$.
  Moreover, so long as $\mathcal{D}$ is a compact subset of the reals, $\mathcal{A}$ has non-zero measure.\footnote{Intuitively, we know that $\mathbf{x}^*$ is in all $\mathcal{A}_{\thet_i}$, so when we add a new sample we know that there is either overlap around $\mathbf{x}^*$ or the point $\mathbf{x}^*$ is on the boundary of the new subset, which means we could equally well pick a different set that has $\mathbf{x}^*$ on its boundary and \textit{does} have non-zero-measure overlap with the previous sets.
  
  More formally, consider some compact set $\mathcal{A}_{\thet_0} \subset \mathcal{D}$ with non-zero measure such that $\mathbf{x}^* \in \mathcal{A}_{\thet_0}$.
  Take some new compact set $\mathcal{A}_{\thet_1} \subset \mathcal{D}$ with non-zero measure also such that $\mathbf{x}^* \in \mathcal{A}_{\thet_1}$.
  Define the intersection between those sets $\mathcal{B} = \mathcal{A}_{\thet_0} \cap \mathcal{A}_{\thet_1}$.
  Suppose that $\mathcal{B}$ has zero measure.
  But both $\mathcal{A}_{\thet_0}$ and $\mathcal{A}_{\thet_1}$ contain $\mathbf{x}^*$, so the only way that $\mathcal{B}$ could have zero measure is if $\mathbf{x}^*$ is an element in the boundary of both sets.
  But if $\mathcal{A}_{\thet_1}$ has $\mathbf{x}^*$ on its boundary, then, by the continuity of the real space, there is at least one other compact set $\mathcal{A}_{\thet_1}'$, different to $\mathcal{A}_{\thet_1}$, such that $\mathbf{x}^*$ is on its boundary.
  But, since by hypothesis $\mathcal{A}_{\thet_0}$ has non-zero measure, there exists such a set $\mathcal{A}_{\thet_1}'$ which has a non-zero-measure intersection with $\mathcal{A}_{\thet_0}$.
  We can therefore select $\mathcal{A}_{\thet_1}'$ instead of $\mathcal{A}_{\thet_1}$ when building $\mathcal{A}$, such that the intersection with $\mathcal{A}_{\thet_0}$ has non-zero measure.
  By repeated application of this argument, we can guarantee that for any finite $\Theta$ we are able to find a set of $\mathcal{A}_{\thet_i} \subset \mathcal{D}$ such that $\forall i:\mathbf{x}^* \in \mathcal{A}_{\thet_i}$ and $\mathcal{A}$ has non-zero measure.
  This argument does not guarantee that the measure of $\mathcal{A}$ in the limit as $N$ tends to infinity is non-zero.
  }
  Figure \ref{fig:linear_regions_5_sample} shows a visualization of $\mathcal{A}$ with 5 samples.
  The linear regions are smaller, because there is a discontinuity if any of the models is discontinuous.
  Nevertheless, the space is composed of regions of finite size within which the neural network function is linear.

  For each $\thet_i$ we can compute a local product matrix within $\mathcal{A}$.
  Ordinarily, setting aside the bias term for simplicity, a neural network hidden layer $\mathbf{h}_{l+1}$ can be written in terms of the hidden layer before it, a weight matrix $W_l$, and an activation function. 
  \begin{align}
    \mathbf{h}_{l+1} = \sigma(W_l\mathbf{h}_{l})
  \end{align}
  We observe that within $\mathcal{A}$ the activation function becomes linear.
  This allows us to define an activation vector $\mathbf{a}_{\mathbf{x}^*}$ within $\mathcal{A}$ such that the equation can be written:
  \begin{align}
    \mathbf{h}_{l+1} = \mathbf{a}_{\mathbf{x}^*} \cdot (W_l\mathbf{h}_{l}).
  \end{align}
  The activation vector can be easily calculated by calculating $W_l\mathbf{h}_l$, seeing which side of the (Leaky) ReLU the activation is on within that linear region for each hidden unit, and selecting the correct scalar (0 or 1 for a ReLU, or $\alpha$ or 1 for a Leaky ReLU).

  This allows us to straightforwardly construct a product matrix for each $\thet_i$ which takes the activation function into account (in the linear case, we effectively always set $\mathbf{a}_{\mathbf{x}^*}$ to equal the unit vector).
  The random variate $P_{\mathbf{x}^*}$ is constructed with these product matrices for realizations of the weight distribution.
  
  Samples from the resulting random variate $P_{\mathbf{x}^*}$ are therefore distributed such that samples from $P_{\mathbf{x}^*} \mathbf{x}^*$ have the same distribution as samples of the predictive posterior $y$ given $\mathbf{x}^*$ within $\mathcal{A}$.
  
  \subsubsection{Proof that the Local Product Matrix has Non-zero Off-diagonal Covariance}
  \maintheorem*
  
  \begin{proof}
  First, we show that the covariance between arbitrary entries  of each realization of the product matrix of linearized functions can be non-zero.
  Afterwards, we will show that this implies that the covariance between arbitrary entries of the product matrix random variate, $P_{\mathbf{x}^*}$ can be non-zero.
  
  Consider a local product matrix constructed as above.
  Then for each realization of the weight matrices, the product matrix realization $M^{(L)}_i$, defined in the region around $\mathbf{x}^*$ following lemma \ref{lemma:linearity}.
  We can derive the covariance between elements of this product matrix within that region in the same way as in Proposition \ref{lemma:covariance}, finding similarly that:
  \begin{align}
    \hat{\Sigma}^{(3)}_{abcd} 
    &= \alpha_{abcd} \sum_{ij} \Big(\delta_{ac}\delta_{ij}\sigma^{(3)}_{ai}\Big)\cdot \sum_k \Big(\delta_{ij}\delta_{bd}\sigma^{(2)}_{ik}\sigma^{(1)}_{kb} \nonumber \\
    &\phantom{\sum_{ij} \Big(\delta_{ac}\delta_{ij}\sigma^{(3)}_{ai}\Big)\cdot \sum_k \Big(} + \delta_{bd}\mu^{(2)}_{ik}\mu^{(2)}_{jk}\sigma^{(1)}_{kb} \nonumber \\
    &\phantom{\sum_{ij} \Big(\delta_{ac}\delta_{ij}\sigma^{(3)}_{ai}\Big)\cdot \sum_k \Big(} + \delta_{ij}\mu^{(1)}_{kb}\mu^{(1)}_{kd}\sigma^{(2)}_{ik}\Big) \nonumber \\
    & \phantom{= \sum_{ij} }+\mu^{(3)}_{ai}\mu^{(3)}_{cj}\cdot \sum_k \Big(\delta_{ij}\delta_{bd}\sigma^{(2)}_{ik}\sigma^{(1)}_{kb} \nonumber \\
    &\phantom{\sum_{ij} \Big(\delta_{ac}\delta_{ij}\sigma^{(3)}_{ai}\Big)\cdot \sum_k \Big(} + \delta_{bd}\mu^{(2)}_{ik}\mu^{(2)}_{jk}\sigma^{(1)}_{kb} \nonumber \\
    &\phantom{\sum_{ij} \Big(\delta_{ac}\delta_{ij}\sigma^{(3)}_{ai}\Big)\cdot \sum_k \Big(} + \delta_{ij}\mu^{(1)}_{kb}\mu^{(1)}_{kd}\sigma^{(2)}_{ik}\Big) \nonumber \\
    & \phantom{= \sum_{ij} }+ \mu^{(2)}_{ib}\mu^{(2)}_{jd} \Big(\delta_{ac}\delta_{ij}\sigma^{(3)}_{ai}\Big),
\end{align}
where $\alpha$ is a constant determined by the piecewise-linearity in the linear region we are considering.
Note that we must assume here that $\alpha_{ij} \neq 0$ except in a region of zero measure, for example a LeakyReLU, otherwise it is possible that the constant introduced by the activation could eliminate non-zero covariances.
We discuss this point further below.
  
  Now note that the covariance of the sum of independent random variables is the sum of their covariances.
  Therefore the covariance of $I$ realizations of $P$ (suppressing the notation $(L)$) is:
  \begin{align}
      \mathrm{Cov}\big(P_{ab}, P_{cd}\big) = \frac{1}{I}\sum_{i=1}^{I} \hat{\Sigma}^i_{abcd}\label{eq:sum_independent}
  \end{align}
  
  As before, consider the case of positive means and standard deviations.
  Just like before, this results in a positive entry in the covariance between any two elements for each realization of the product matrix, and by equation \ref{eq:sum_independent} the entry for any element of the local product matrix remains positive as well.
  This suffices to prove the proposition for the case of $L=3$.
  Just as Proposition \ref{lemma:covariance} extends to all larger $L$, this result does also.
  
  \begin{remark}
    The above proof assumes that $\alpha_{ij} \neq 0$.
    In fact, for common piecewise non-linearities like ReLU, $\alpha$ may indeed be zero.
    This means that the non-linearity can, in principle, `disconnect' regions of the network such that the `effective depth' falls below 3 and there is not a covariance between every element.
  
    We cannot rule this out theoretically, as it depends on the data and learned function.
    In practice, we find it very unlikely that a trained neural network will turn off all its activations for any typical input, nor that enough activations will be zero that the product matrix does not have shared elements after some depth.
    
    However, we do observe that for at least some network structures and datasets it is uncommon in practice that all the activations in several layers are `switched off'.
    We show in Figure \ref{fig:cov_heatmap} an example of a local product matrix covariance which does not suffer from this problem. We find that for a model trained with mean-field VI on the FashionMNIST test dataset the number of activations switched on is on average 48.5\% with standard deviation 4.7\%. There were only four sampled models out of 100 samples on each of 10,000 test points where an entire row of activations was `switched off', reducing the effective depth by one, and this never occurred in more than one row. Indeed, \citep{goldblum_truth_2019} describe settings with all activations switched off as a pathological case where SGD fails.
  \end{remark}
  \end{proof}

  \subsection{Existence Proof of a Two-Hidden-Layer Mean-field Approximate Posterior Inducing the True Posterior Predictive}
  \label{a:uat}
  
  In this section we prove that:

  \theoremuat*
  \newcommand{\xin}{\mathbf{x}}
  \newcommand{\yout}{\mathbf{y}}
  \newcommand{\Yout}{\mathbf{Y}}
  \newcommand{\Wa}{\thet_0}
  \newcommand{\Wb}{\thet_1}
  \newcommand{\Wc}{\thet_2}
  \newcommand{\ba}{\mathbf{b}_0}
  \newcommand{\bb}{\mathbf{b}_1}
  \newcommand{\bc}{\mathbf{b}_2}
  \newcommand{\hb}{\mathbf{h}_1}
  \newcommand{\hc}{\mathbf{h}_2}
  \newcommand{\tpdf}{f_{Y|\xin'}}
  \newcommand{\apdf}{f_{\hat{Y}|\xin'}}
  \newcommand{\tcdf}{F_{Y|\xin'}}
  \newcommand{\acdf}{F_{\hat{Y}|\xin'}}
  \newcommand{\itcdf}{F_{Y|\xin'}^{-1}}
  \newcommand{\ifcdf}{G_{\xin'}^{-1}}
  \newcommand{\iafcdf}{\hat{G}^{-1}}
  \begin{proof}

  We extend an informal construction by \citet{gal_uncertainty_2016} which aimed to show that a sufficiently deep network with a unimodal approximate posterior could induce a multi-modal posterior predictive by learning the inverse cumulative distribution function (c.d.f.) of the multi-modal distribution.
  In our case, we are not chiefly interested in the number of modes, but more generally the expressive power of the mean-field distribution in a BNN of sufficient width and depth.
  First, we outline a simplified version of the proof that highlights the main mechanisms involved but is not constructed with a Bayesian neural network.
  Later, we prove the full result for Bayesian neural networks.

  \subsubsection{Simplified Construction}
  \tikzstyle{default}=[fill=white, draw=black, shape=circle]
  \tikzstyle{none}=[]
  \tikzstyle{pointer}=[->, draw=black]
  \begin{figure}[!h]
    \centering
    \begin{tikzpicture}
      \begin{pgfonlayer}{nodelayer}
        \node [style=default] (1) at (-4, 4) {$U$};
        \node [style=none, label={below:$f_{\mathbf{x}} \approx F^{-1}_{Y|\xin}$}] (2) at (-2, 4) {};
        \node [style=default] (3) at (-2, 2.5) {$\mathbf{x}$};
        \node [style=default] (4) at (0, 4) {$\hat{Y}$};
      \end{pgfonlayer}
      \begin{pgfonlayer}{edgelayer}
        \draw (1) to (2.center);
        \draw [style=pointer] (3) to (-2, 3.2);
        \draw [style=pointer] (2.center) to (4);
      \end{pgfonlayer}
    \end{tikzpicture}
    \caption{The random variable whose probability density function is the true predictive posterior $p(y|\mathbf{x}, \mathcal{D})$ can be written $Y|\mathbf{x}$. If it has an inverse cumulative density function (c.d.f.), $F^{-1}_{Y|\mathbf{x}}$, we can transform a uniform random variable $U$ onto it. We can approximate this inverse c.d.f.\ with $f_{\mathbf{x}}$ indexed by $\mathbf{x}$. The random variable given by $\hat{Y} \coloneqq f_{\mathbf{x}}(U)$ can be constructed to be `similar' to $Y|\mathbf{x}$ as we show.}
    \label{fig:simple_proof}
  \end{figure}
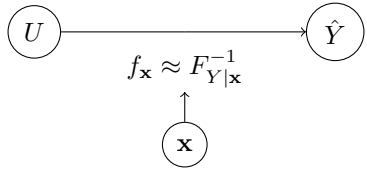

  \begin{lemma}\label{lem:simplified_construction}
    Let $p(y=Y|\mathbf{x}, \mathcal{D})$ be the probability density function for the posterior predictive distribution of any given univariate regression function.
    Let $U$ be a uniformly distributed random variable.
    Let $f(\cdot)$ be a deterministic neural network with a single hidden layer of arbitrary width and invertible non-polynomial activations.
    Let $\hat{Y}$ be the random variable defined by $f(U, \mathbf{x})$.
    Then, for any $\epsilon > 0$, there exists a set of parameters defining a sufficiently wide $f$ such that the absolute value of the difference in probability densities for any point is bounded:
    \begin{equation}
      \forall y, \mathbf{x}: \quad \abs{p(y=\hat{Y}) - p(y=Y|\mathbf{x}, \mathcal{D})} < \epsilon, \label{eq:simplified_result}
    \end{equation}
    so long as the cumulative distribution function of the posterior predictive is continuous in output-space and the probability density function is non-zero everywhere.
  \end{lemma}
  \begin{proof}
    An outline of the proof for the simplified case is shown in Figure \ref{fig:simple_proof}.

    Suppose there is a true posterior distribution over function outputs whose probability density function (p.d.f) is given by $p(y=Y|\mathbf{x}, \mathcal{D})$.
    These define a random variable that we will denote $Y | \mathbf{x}$.

    We also have some approximation that takes $\mathbf{x}$ as an input and returns some $y$ in the output space.
    Later, this will be our Bayesian neural network, but for now we simplify.
    Instead, our procedure is to have some deterministic neural network which accepts as an input a realization of a uniformly distributed random variable, $U$, and the input point $\mathbf{x}$.
    We define a random variable $\hat{Y} \coloneqq f(U, \mathbf{x})$.

    We would like to show that it is possible to construct a neural network $f$ such that the result in equation \eqref{eq:simplified_result} holds---that $\hat{Y}$ is suitably similar to the true predictive posterior random variable $Y|\mathbf{x}$.

    First, note that if $Y|\mathbf{x}$ has an inverse cumulative density function (c.d.f.) then, by the universality of the uniform, transforming a uniform random variable by this function creates a random variable distributed as $Y|\mathbf{x}$.
    As a result, if there is such an invertible cumulative density function, there is also a function mapping $U$ and $\mathbf{x}$ onto $Y|\mathbf{x}$.

    Second, consider the conditions under which $Y|\mathbf{x}$ has an invertible c.d.f.
    We must assume that the c.d.f.\ is continuous in output space and that the probability density function is non-zero everywhere. 
    The first is reasonable for most normal problems, we often make a stronger assumption of Lipschitz continuity.
    The second is also relatively mild, corresponding to non-dogmatic certainty (a posterior distribution that puts zero probability density on some output given some input can never update away from that in light of new information).
    Given these mild assumptions, therefore, we know that there exists a continuous function $F_{Y|\mathbf{x}}^{-1}$ which is the inverse of the c.d.f.\ of $Y|\mathbf{x}$.
  
    Third, we consider how we might approximate this function.
    Here, we invoke the universal approximation theorem (UAT) \citep{leshno_multilayer_1993}.
    This states that for any continuous function $g$, arbitrary fixed error, $\epsilon$, and compact subset $\mathcal{A}$ of $\mathbb{R}^D$, there exists a deterministic neural network with an arbitrarily wide single layer of hidden units and a non-polynomial activation, $f$ such that:
    \begin{equation}
      \forall \mathbf{a} \in \mathcal{A}: \lvert f(\mathbf{a}) - g(\mathbf{a}) \rvert < \epsilon. \label{eq:uat}
    \end{equation}
    By setting the arbitrary continuous function as the inverse c.d.f.\ of $Y|\mathbf{x}$, that is, $g(U,\mathbf{x}) = F^{-1}_{Y|\mathbf{x}}(U)$ (which we have already assumed is continuous) it follows that:
    \begin{equation}
      \forall u, \mathbf{x} \in \mathcal{A}: \lvert f(u, \mathbf{x}) - F^{-1}_{Y|\mathbf{x}}(u) \rvert < \epsilon, \quad u \sim U. \label{eq:uat_2}
    \end{equation}
    Fourth, we convert this bound on the inverse c.d.f.\ into a bound on the c.d.f.\ and then a bound on the p.d.f.
    We rewrite the function represented by the neural network to make explicit that we are using $\mathbf{x}$ to index a function from $u$ to $y$: $y = f(u, \mathbf{x}) = f_{\mathbf{x}}(u)$.

    For this step, we will need to be able to invert the approximation to the inverse CDF such that $u = f^{-1}_{\mathbf{x}}(y)$ (for fixed $\xin$).
    In general for neural network functions this is not true. As a result, we employ a construction which breaks apart the line over which $y$ runs into subsegments within which the network is invertible. For non-periodic activation functions which have only zero-measure non-monotonic regions (e.g., ReLU) there will be finitely many of these segments given a finite number of hidden units. Let us index over these subregions with $i$, noting that we can think of the distribution over $y$ as a weighted mixture distribution whose members have zero-density outside of the subregions. The approximate inverse CDF of each of these sub-region mixture members can be written as $f_{\xin}^{i}(u)$ such that $f_{\xin}(u) = \sum_i f_{\xin}^{i}(u)$. Each of these $f_{\xin}^{i}(u)$ is invertible.
    We can therefore rewrite equation \eqref{eq:uat_2} as:
    \begin{align}
      \forall y, \mathbf{x} \in \mathcal{A}: \quad &\abs{\sum_i f_{\mathbf{x}}({f^i_{\mathbf{x}}}^{-1}(y)) - F^{-1}_{Y|\mathbf{x}}({f^i_{\mathbf{x}}}^{-1}(y)) } < \epsilon. \label{eq:uat_cdf}
      \intertext{which implies that:}
      \forall y, \mathbf{x} \in \mathcal{A}: \quad &\lvert \sum_i y - F^{-1}_{Y|\mathbf{x}}({f^{i}_{\xin}}^{-1}(y)) \rvert < \epsilon. \label{eq:uat_simple}
    \end{align}

    Remember that we assumed above that the c.d.f. of $Y|\xin$ is uniformly continuous, which means that for any $y'$ and $y''$, and any $\epsilon > 0$ there exists a $\delta$ such that if $\abs{y' - y''} < \delta$ then $\abs{F_{Y|\mathbf{x}}(y') - F_{Y|\mathbf{x}}(y'')} < \epsilon$.
    Alongside equation \eqref{eq:uat_simple}, and canceling the c.d.f.\ with the inverse c.d.f.\ this entails that:
    \begin{equation}
      \forall y, \mathbf{x} \in \mathcal{A}: \quad \lvert \sum_i F_{Y|\mathbf{x}}(y) - {f^{i}_{\xin}}^{-1}(y)) \rvert < \epsilon. \label{eq:uat_full_cdf}
    \end{equation}
    But since ${f^{i}_{\xin}}^{-1}(y)$ is zero by construction outside of its subregion, this results in a bound on the overall c.d.f. of the random variable.
    That is to say, the bound on the inverse c.d.f.\ implies a bound on the c.d.f.

    Finally, we remember that the cumulative density is the integral of the probability density function.
    Therefore, by Theorem 7.17 of \citet{differentiation_uniform_convergence} and the uniform convergence in the c.d.f.s, it follows that:
    \begin{equation}
      \forall y, \xin \in \mathcal{A}: \quad \abs{p(y=Y|\mathbf{x}, \mathcal{D}) - p(y=\hat{Y})} < \epsilon
    \end{equation}
    introducing a bound in the probability density functions of the random variable of true posterior outputs and the outputs of the approximation $\hat{Y} = f(U, \xin)$.
  \end{proof}

  \subsubsection{Full Construction}
  The full construction extends the result above in the following ways:
  \begin{itemize}
    \item Rather than separately introducing $U$, we show how the first layer of a Bayesian neural network can map $\xin \to \xin', Z$, where $Z$ is a unit Gaussian random variable and $\xin'$ is a noised version of $\xin$.
    \item Rather than using a deterministic neural network for the universal approximation theorem, we apply the stochastic adaptation introduced by \citet{foong_expressiveness_2020}.
    \item Rather than a univariate regression, we consider multivariate regression.
  \end{itemize}

  Like \citet{leshno_multilayer_1993} we note that the extension from univariate to multivariate regression follows trivially from the existence of a mapping from $\mathbb{R} \to \mathbb{R}^K$.\footnote{A slightly complication is added by the continuity requirements. However, we note that the assumption that the p.d.f.\ is finite everywhere guarantees that there is a continuous function over $y$ which contains continous segments for each of $K$ dimensions, even if those individual segments are not continuous with each other.}

  We first give some intuition as to how the proof works.
  The first weight layer serves to map $\mathbf{x} \to \mathbf{x}', Z$.
  This sets us up in a similar situation to the proof in the previous section, where we began with $\mathbf{x}, U$.
  This requires two small adjustments to the proof above.
  The first is that the random variable we introduce is now Gaussian, rather than Uniform.
  The second is that all our results will be in terms of $\xin'$, rather than $\xin$, and an additional step will be required to convert a probabilistic bound in one to the other (noting that we can freely set the weights in the first layer to have arbitrarily small variance).

  The second weight layer will play the role of the neural network in the simplified proof. This also will require a small modification, because earlier we assumed that the neural network was deterministic, but it is now stochastic. This means that the final result becomes a probabilistic
  bound.
  
  As a result of all of these changes, the proof becomes considerably more complicated, though nothing important changes in the intuition behind the construction.

  \textbf{Step 1: Mapping $\xin \to \xin', Z$}

  Consider inputs $\mathbf{x} \in \mathbb{R}^D$.
  We define a two-hidden-layer neural network with an invertible non-polynomial activation function $\phi: \mathbb{R} \mapsto \mathbb{R}$.
  The first component of the network is a single weight matrix mapping onto a vector of hidden units: $\hb = \phi(\Wa \xin + \ba)$.
  The second component is a neural network with a layer of hidden units defined relative to the first layer of units $\hc = \phi(\Wb \hb + \bb)$, and outputs $\yout = \Wc \hc + \bc$.
  The distribution over the outputs $\yout$ defines the random variable $\hat{\Yout}$.
  Here, $\Wa$, $\Wb$, and $\Wc$ are matrices of independent Gaussian random variables and $\ba$, $\bb$, and $\bc$ are vectors of independent Gaussian random variables.
  Given some dataset $\mathcal{D}$ the predictive posterior distribution over outputs is $p(\yout|\mathbf{x}, \mathcal{D})$ which we associate with the random variable $\Yout$.
  This is our (intractable) target.

  Consider only the first component of the neural network, which maps $\xin$ onto $\hb$.
  We can construct simple constraints on $\Wa$ and $\ba$ such that:
  \begin{equation}
    \hb = \begin{pmatrix}
      \phi(z) \\
      \phi(\xin')
    \end{pmatrix},
  \end{equation}
  where $z \sim Z$, a unit Gaussian random variable, and $\xin' \in \mathbb{R}^D$, such that $\textup{Pr}\left(\norm{\xin' - \xin} > \epsilon_1\right) < \delta_1$.
  In particular, suppose that for $\Wa \in \mathbb{R}^{D \times D+1}$ and $\ba \in \mathbb{R}^{D+1}$:
  \begin{align}
    \Wa &= \mathcal{N}\left(M_{\Wa}, \Sigma_{\Wa}\right) \text{ where } M_{\Wa} = (\mathbf{0}_D, \mathbb{I}_{D\times D}); \Sigma_{\Wa} = \sigma^2\mathbb{I}_{D\times D+1};\\
    \ba &= \mathcal{N}\left(\boldsymbol{\mu}_{\ba}, \boldsymbol{\sigma}_{\ba}\right) \text{ where } \boldsymbol{\mu}_{\ba} = \mathbf{0}_{D+1}, \boldsymbol{\sigma}_{\ba} = \begin{pmatrix}1& \sigma& \dots& \sigma\end{pmatrix}.
  \end{align}
  By multiplication, straightforwardly $\xin' = \mathcal{N}\left(\xin, 2 \sigma^2 \mathbf{1}\right)$.
  It follows trivially that for any $\epsilon_1$ and $\delta_1$ there exists some $\sigma$ such that the bound holds.
  We will apply this bound at the end of the proof to convert a bound in $\xin'$ to one in $\xin$.

  Here, we introduce a distinction between the weights which determine $\xin'$ and those that create $Z$. The only weights which determine $Z$ are the first element of $\Wa$ and the first element of $\ba$. Call these $\thet_{Z}$. We then define the remainder of the weight distributions as $\thet_{\textup{Pr}} \coloneqq \{\Wa, \Wb, \Wc, \ba, \bb, \bc\} \setminus \thet_{Z}$.
  This distinction is important, because the probabilistic bound in the proof will be over $\thet_{\textup{Pr}}$ while the distribution over $\thet_{Z}$ will induce the random variable $\hat{\mathbf{Y}}$.

  \textbf{Step 2: Invoking a Result Similar to the Simplified Construction}

  We show that there is a function which maps $Z$ and $\xin'$ onto $\Yout$, under reasonable assumptions similar to those of the simplified construction.
  For brevity, we denote the probability density function (p.d.f.) of the true posterior predictive distribtion of the random variable $\Yout$ conditioned on $\xin'$ and $\mathcal{D}$ as $\tpdf \equiv p(Y=y | \xin', \mathcal{D})$ and the cumulative density function (c.d.f.) as $\tcdf$.
  We similarly write the inverse c.d.f. as $\itcdf$.

  We must adapt the simplified construction to account for the fact that rather than simply approximating the inverse of the c.d.f.\ we now need to also transform the Gaussian random variable onto a Uniform one and invert the activation.
  We show below:

  \begin{restatable}{lemma}{leminvcdf}\label{lemma:invcdf}
     There exists continous function $\ifcdf = \itcdf \cdot F_{Z} \cdot \phi^{-1} $, where $F_{Z}$ is the c.d.f. of the unit Gaussian and $\phi^{-1}$ is the inverse of the activation function, such that the random variable $\ifcdf(\phi(Z))$ is equal in distribution to $\Yout|\mathbf{x}$ , if the p.d.f.\ of the posterior predictive is non-zero everywhere and the c.d.f.\ is continuous.
  \end{restatable}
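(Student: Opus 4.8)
The plan is to construct $\ifcdf$ as a composition of three elementary maps whose behaviour is individually well understood, and then to verify that the composition (i) is continuous and (ii) pushes the law of $\phi(Z)$ forward onto the law of $Y\mid\xin'$. Reading off the statement, the three maps, applied innermost first, are the inverse activation $\phi^{-1}$, the standard-Gaussian c.d.f.\ $F_Z$, and the quantile function $\itcdf$ of the target posterior predictive, so that $\ifcdf(\phi(Z)) = \itcdf\bigl(F_Z(\phi^{-1}(\phi(Z)))\bigr)$.

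First I would note that the activation $\phi$ is, by assumption, a continuous invertible map of $\mathbb{R}$ onto its image --- hence a homeomorphism --- so $\phi^{-1}$ is continuous and $\phi^{-1}(\phi(Z)) = Z$ almost surely, which collapses the expression to $\itcdf(F_Z(Z))$. Next, since $Z$ is a unit Gaussian, $F_Z$ is continuous and strictly increasing, so the probability integral transform gives $F_Z(Z)\sim\mathrm{Unif}(0,1)$, with the endpoints $0$ and $1$ attained only on a null set. Finally I would invoke the classical inverse-transform sampling identity: if $U\sim\mathrm{Unif}(0,1)$ then $\itcdf(U)$ has c.d.f.\ $\tcdf$. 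Chaining these three facts yields that $\ifcdf(\phi(Z))$ is equal in distribution to $Y\mid\xin'$, as claimed.

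For continuity of $\ifcdf$ I would lean on the standing regularity hypothesis of Theorem~\ref{thm:uat}, namely that the posterior predictive c.d.f.\ is \emph{monotonically increasing and continuous}. Strict monotonicity is exactly what promotes $\itcdf$ from a generalized quantile to a genuine two-sided inverse, and makes it continuous on $(0,1)$; absolute continuity of the p.d.f.\ additionally guarantees $\tcdf$ has no atoms. Since $\phi^{-1}$ and $F_Z$ are continuous on their domains and $\itcdf$ is continuous on $(0,1)$, which contains the range over which it is actually evaluated, the finite composition $\ifcdf = \itcdf\circ F_Z\circ\phi^{-1}$ is continuous.

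The main thing to be careful about --- more a matter of bookkeeping than a genuine obstacle --- is the treatment of the endpoints: one must confirm that $F_Z(Z)\notin\{0,1\}$ almost surely so that $\itcdf$ is only ever evaluated where it is finite and continuous, and one should state explicitly that the ``monotone increasing and continuous'' assumption on $\tcdf$ is load-bearing, since without strict monotonicity the quantile function can jump and the distributional identity would have to be recast with a generalized inverse. Beyond the probability integral transform, no further machinery is required.
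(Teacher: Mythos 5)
Your proposal is correct and follows essentially the same route as the paper's own proof: collapse $\phi^{-1}(\phi(Z))=Z$, apply the probability integral transform to get a uniform variate from $F_Z(Z)$, and then apply inverse-transform sampling through the (invertible, by the monotonicity/continuity assumption) posterior predictive c.d.f. Your treatment is in fact slightly more careful than the paper's, which asserts continuity of the composed map without spelling out the endpoint issue for the quantile function that you correctly flag.
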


  The limitation to p.d.f.s is modest as before.
  The naming of the function $\ifcdf$ is suggestive, and indeed its inverse exists if $\tcdf$ is invertible, since the c.d.f.\ of a unit Gaussian is invertible (though this function cannot be easily expressed).

  Whereas in the simplified construction we showed that the neural network could approximate the inverse c.d.f., here we show that the second hidden layer of our larger Bayesian neural network can approximate the more complicated function required by lemma \ref{lemma:invcdf}.
  This allows the second hidden layer of the neural network to transform $\xin', Z$ onto $\hat{\Yout}$ such that $\hat{\Yout}$ is appropriately similar to $\Yout$.
  First we show that we can approximate the function $\ifcdf$, generating a probabilistic bound because the weights of the neural network are now Gaussian random variables:

  \begin{restatable}{lemma}{lemfmatching}\label{lemma:fmatching}
    For a uniformly continous function $\ifcdf(z): z, \xin' \mapsto \yout$, for any $\epsilon, \delta>0$ and compact subset $\mathcal{A}$ of $\mathbb{R}^D$, there exist fully-factorized Gaussian approximating distributions $q(\Wb)$, $q(\Wc)$, $q(\bb)$, and $q(\bc)$, and a function over the outputs of the later part of the neural network: $\iafcdf(\xin', z) \equiv \Wc(\sigma(\Wb\hb) + \bb) +\bc$ (remembering that $\hb \equiv \phi(z, \xin')$), such that:
    \begin{equation}
        \textup{Pr}\Big(\big\lvert \iafcdf(\xin', z) - \ifcdf(z)\big\rvert > \epsilon \Big) < \delta, \quad \forall \xin' \in \mathcal{A}, z.
    \end{equation}
    The probability measure is over the weight distributions of $\Wb, \Wc, \bb, \bc$.
  \end{restatable}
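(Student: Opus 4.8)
The plan is to obtain Lemma~\ref{lemma:fmatching} by combining the deterministic universal approximation theorem of \citet{leshno_multilayer_1993} with a small-variance concentration argument that lifts a deterministic approximating network into a Bayesian one with fully-factorized Gaussian weights, in the spirit of \citet{foong_expressiveness_2020}. The ``random variable mapper'' $\hb\mapsto\Wc\,\phi(\Wb\hb+\bb)+\bc$ is precisely a single-hidden-layer network on the $(D+1)$-dimensional input $\hb\equiv(z,\xin')$, so the whole task is to show that such a network, equipped with Gaussian weight distributions, can emulate the uniformly continuous target $\ifcdf$ (which by Lemma~\ref{lemma:invcdf} depends only on the coordinate $\phi(z)$ of $\hb$, though this is immaterial) to within $\epsilon_2$ with probability at least $1-\delta_2$ over its weights, uniformly in the input.

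First I would reduce to a compact input domain. In the application of this lemma the arguments are distributed as constructed in Step~2, with $z\sim\mathcal{N}(0,1)$ and $\xin'\sim\mathcal{N}(\xin,2\sigma^2\mathbf{1})$, so there is a compact $K\subset\mathbb{R}^{D+1}$ carrying all but an arbitrarily small fraction of their mass, and the discrepancy on $\mathbb{R}^{D+1}\setminus K$ is absorbed into the budget $\delta_2$. On $K$ the target $\ifcdf$ is continuous, and since $\phi$ is non-polynomial (indeed invertible and non-polynomial), \citet{leshno_multilayer_1993} supplies a width $H$ and \emph{deterministic} parameters $\widehat{\Wb},\widehat{\bb},\widehat{\Wc},\widehat{\bc}$ such that $g(\hb)\equiv\widehat{\Wc}\,\phi(\widehat{\Wb}\hb+\widehat{\bb})+\widehat{\bc}$ satisfies $\sup_{\hb\in K}\bigl|g(\hb)-\ifcdf(\hb)\bigr|<\epsilon_2/2$. (Uniform continuity of $\ifcdf$, rather than mere continuity, is convenient here because it lets us control the target uniformly, including near $\partial K$.)

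Next I would replace $g$'s parameters by diagonal Gaussians centred at $\widehat{\Wb},\widehat{\bb},\widehat{\Wc},\widehat{\bc}$ with a common small variance $\tau^2$; these are fully factorized, as required. For a fixed input, $\iafcdf(\xin',z)$ is a continuous function of the weights that equals $g(\hb)$ at $\tau=0$, and as $\tau\to 0$ the weights concentrate at their means. A Chebyshev-type estimate, made uniform over the compact set of pre-activations $\{\widehat{\Wb}\hb+\widehat{\bb}:\hb\in K\}$, then yields a $\tau$ small enough that $\textup{Pr}\bigl(\sup_{\hb\in K}|\iafcdf(\xin',z)-g(\hb)|>\epsilon_2/2\bigr)<\delta_2$. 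Combining this with the deterministic bound via the triangle inequality gives $\textup{Pr}\bigl(|\iafcdf(\xin',z)-\ifcdf(z)|>\epsilon_2\bigr)<\delta_2$ for all $\xin',z\in K$, and hence for all $\xin',z$ at the cost of slightly enlarging $\delta_2$ by the escape probability from $K$.

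I expect the main obstacle to be making the concentration \emph{uniform in the input}: a single random weight draw must be close to $g$ simultaneously over all of $K$, not merely pointwise, and $\phi$ need not be globally Lipschitz. I would handle this by first intersecting with a high-probability compact set of weight values, on which $\phi$ has a controlled modulus of continuity, then bounding $|\iafcdf-g|$ there by an estimate that is Lipschitz in $\tau$ using compactness of $K$, and only afterwards invoking the probability bound. A secondary subtlety is the literal ``$\forall\,\xin',z$'' in the statement: a finite-width single-hidden-layer network with bounded-variance Gaussian weights cannot match $\ifcdf$ everywhere on $\mathbb{R}^{D+1}$ (for instance when $\phi$ is bounded but the posterior predictive has unbounded support), so this is discharged exactly as above, by noting that within Theorem~\ref{thm:uat} the distributions of $z$ and $\xin'$ are pinned down by Step~2 and therefore lie inside a known compact $K$ up to arbitrarily small probability.
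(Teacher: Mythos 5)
Your proposal follows essentially the same route as the paper's proof: approximate the target with a deterministic single-hidden-layer network via the UAT of \citet{leshno_multilayer_1993}, then centre fully-factorized Gaussians at those deterministic weights and shrink their variances so the stochastic network concentrates around its mean network uniformly over inputs, finishing with a triangle inequality --- the paper simply outsources your Chebyshev-style concentration step to Lemma~7 of \citet{foong_expressiveness_2020} rather than rederiving it. Your explicit restriction to a compact input set addresses a point the paper glosses over (it states the UAT with an unqualified ``$\forall \mathbf{x}$'', which Leshno et al.\ only supply on compacta), so on that caveat your argument is, if anything, more careful than the published one.
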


  Having shown that the second component can approximate the inverse c.d.f.\ to within a bound, we as before we further show that the random variable created by this transformation has a p.d.f.\ within a bound of the p.d.f.\ of $\Yout$, which suffices to prove the desired result.

  For this, we show this below for the transformed variable $\xin'$:
  \begin{restatable}{lemma}{lempdf}\label{lemma:pdf}
    For any $\epsilon > 0$ and $\delta > 0$ there exists a mean-field weight distribution $q(\Wb,\Wc,\bb,\bc)$ such that the probability density functions are bounded:
    \begin{equation}
      \textup{Pr}\Big(\big\lvert p(y_i = \hat{\Yout}_i) - p(y_i = \Yout_i|\xin', \mathcal{D})\big\rvert > \epsilon \Big) < \delta, \quad \forall \xin', y_i, \mathcal{A}.
  \end{equation}
  \end{restatable}
  We then move the bounds onto an expression in the original features, $\xin$.
  Recall from before that because the variance of the weights in the first layer can be arbitrarily small, that for any $\epsilon$ there is a $\delta$:
\begin{equation}
  \textup{Pr}\left(\norm{\xin' - \xin} > \epsilon\right) < \delta \quad \forall \xin, \xin' \in \mathcal{A},
\end{equation}
where the probability measure is over $\thet_{\textup{Pr}}$.
Moreover, since we have assumed that the probability density function is continuous, this bound alongside the previous bound on the probability density functions jointly entail that:
\begin{equation}
  \textup{Pr}\Big(\big\lvert p(y_i = \hat{\Yout}_i) - p(y_i = \Yout_i|\xin, \mathcal{D})\big\rvert > \epsilon \Big) < \delta, \quad \forall \xin \in \mathcal{A}, y_i.
\end{equation}
where the probability measure is over $\thet_{\textup{Pr}}$.

\end{proof}

Below, we prove the lemmas required in the proposition above.
  
  \leminvcdf*
  \begin{proof}
      Trivially $\phi^{-1}(\phi(Z)) = Z$.

      Let $F_Z$ be the cummulative distribution function (c.d.f.) of the unit Gaussian random variable Z.
      By the Universality of the Uniform, $U = F_Z(Z)$ has a standard uniform distribution.

      Let $F_{Y|\xin}$ be the c.d.f.\ of $\Yout$ conditioned on $\mathbf{x}$, and $\mathcal{D}$.
      Suppose that the posterior predictive is non-zero everywhere (that is, you cannot rule out that there's even the remotest chance of any $y_i$ given some input $\mathbf{x}$, however small).
      Then, since the c.d.f. is continuous by assumption, $F_{Y|\xin}$ is invertible.

      Again, by the Universality of the Uniform $U' = F_{Y|\xin}(y)$ has a standard uniform distribution. So $\forall u: p(U = u) = p(U' = u)$.
      Moreover $F_{Y|\xin}$ is invertible.
      So $p(Y=y) = F_{Y|\xin}^{-1}(F_Z(Z))$.

      It follows that there exists a continuous function as required.

  \end{proof}

  \lemfmatching*
  \begin{proof}
    The Universal Approximation Theorem (UAT) states that for any continuous function $f$, and an arbitrary fixed error, $e$, and compact subset $\mathcal{A}$ of $\mathbb{R}^D$, there exists a deterministic neural network with an arbitrarily wide single layer of hidden units and a non-polynomial activation, $\sigma$:
    \begin{equation}
      \forall \mathbf{x} \in \mathcal{A}: \lvert \sigma(\mathbf{w}_2(\sigma(\mathbf{w}_1\mathbf{x}) + \mathbf{b}_1) +\mathbf{b}_2) - f(\mathbf{x}) \rvert < e. \label{eq:uat}
    \end{equation}

    In addition, we make use of Lemma 7 of \citep{foong_expressiveness_2020}. This states that for any $e', \delta_2 > 0$, for some fixed means $\boldsymbol{\mu}_1$, $\boldsymbol{\mu}_2$, $\boldsymbol{\mu}_{b_1}$, $\boldsymbol{\mu}_{b_2}$ of $q(\thet_1)$, $q(\thet_2)$, $q(\mathbf{b}_1)$, and $q(\mathbf{b}_2)$ respectively, there exists some standard deviation $s'>0$ for all those approximate posteriors such that for all $s < s'$, for any $\mathbf{h}_1 \equiv (z, \xin') \in \mathbb{R}^{N+1}$
    \begin{equation}
      \text{Pr}\Big(\big\lvert \sigma(\thet_2(\sigma(\thet_1\mathbf{h}_1) + \mathbf{b}_1) +\mathbf{b}_2) - \sigma(\boldsymbol{\mu}_2(\sigma(\boldsymbol{\mu}_1\mathbf{h}_1) + \boldsymbol{\mu}_{b_1}) +\boldsymbol{\mu}_{b_2})\big\rvert > e' \Big) < d.
  \end{equation}
  Note that the deterministic weights of equation (\ref{eq:uat}) can just be these means.
  As a result:
  \begin{equation}
    \text{Pr}\Big(\big\lvert \sigma(\thet_2(\sigma(\thet_1\mathbf{h}_1) + \mathbf{b}_1) +\mathbf{b}_2) - f(\hb)\big\rvert > e + e' \Big) < \delta.
\end{equation}
We note that we define $\iafcdf(\xin', z) \equiv \sigma(\Wc(\sigma(\Wb\hb) + \bb) +\bc)$ as above, and that $f(\hb)$ may be $G_{\xin'}^{-1}(z)$, which is assumed to be uniformly continuous.
It follows, allowing $\epsilon = e + e'$:
\begin{equation}
    \text{Pr}\Big(\big\lvert \iafcdf(\xin', z) - G_{\xin'}^{-1}(z)\big\rvert > \epsilon \Big) < \delta.
\end{equation}
as required.

\lempdf*
In this lemma, we show that a bound on the inverse c.d.f.\ used to map $Z$ onto our target implies a bound in the p.d.f.\ of that constructed random variable to the p.d.f.\ of our target.

This lemma follows the argument of the simplified construction, with some additional complexity of notation introduced by the requirement that the input random variable was Gaussian rather than Uniform. Here, we complete the proof steps with a univariate $y$ to simplify notation, noting that because we can map $\mathbb{R} \to \mathbb{R}^K$ the multivariate regression follows trivially from the univariate result.

We first note that the result of Lemma \ref{lemma:fmatching} can be applied to $z' = \hat{G}(\xin', y)$ using an inverted version of our network function such that:
  \begin{equation}
    \textup{Pr}\left(\abs*{\iafcdf(\xin', \hat{G}(\xin', y)) - \ifcdf(\hat{G}(\xin', y))} > \epsilon_2 \right) < \delta_2, \quad \forall \xin' \in \mathcal{A}, y.
  \end{equation}
  We further note that by the triangle inequality:
  \begin{align}
    \abs*{\ifcdf(G_{\xin'}(y)) - \ifcdf(\hat{G}(\xin', y))} &\leq \abs*{\ifcdf(G_{\xin'}(y)) - \iafcdf(\xin', \hat{G}(\xin', y))} \\ &\quad+ \abs*{\iafcdf(\xin', \hat{G}(\xin', y)) - \ifcdf(\hat{G}(\xin', y))},
    \intertext{and since $\ifcdf(G_{\xin'}(y)) = \iafcdf(\xin', \hat{G}(\xin', y)) = \xin'$:}
    &\leq \abs*{\iafcdf(\xin', \hat{G}(\xin', y)) - \ifcdf(\hat{G}(\xin', y))}.
  \end{align}
  Inserting this inequality into the result of Lemma \ref{lemma:fmatching}, we have that:
  \begin{equation}
    \textup{Pr}\Big(\abs*{\ifcdf(G_{\xin'}(y)) - \ifcdf(\hat{G}(\xin', y))} > \epsilon_2 \Big) < \delta_2, \quad \forall  y, \xin' \in \mathcal{A}.\label{eq:reworked_bound}
\end{equation}
But we further note that we have assumed that the c.d.f.\ $G_{\xin'}$ is uniformly continous so for any $y'$ and $y''$, and for any $\epsilon'>0$ there is an $\epsilon''>0$ and vice versa, such that if:
\begin{equation}
  \abs*{y' - y''} < \epsilon',
\end{equation}
then:
\begin{equation}
  \abs*{G_{\xin'}(y') - G_{\xin'}(y'')} < \epsilon''.
\end{equation}
It follows that for any $\epsilon$ and $\delta$ there is a $q(\thet)$ such that:
\begin{equation}
  \textup{Pr}\Big(\abs*{G_{\xin'}(\ifcdf(G_{\xin'}(y))) - G_{\xin'}(\ifcdf(\hat{G}(\xin', y)))} > \epsilon \Big) < \delta, \quad \forall \xin' \in \mathcal{A}, y
\end{equation}
and therefore:
\begin{equation}
  \textup{Pr}\Big(\abs*{(G_{\xin'}(y) - \hat{G}(\xin', y)} > \epsilon \Big) < \delta, \quad \forall \xin' \in \mathcal{A}, y.
\end{equation}
Next, we remember that $G_{\xin'} = \phi \cdot F^{-1}_Z \cdot \tcdf$, and that $\phi$ and $F^{-1}_{Z}$ are continuous, and therefore:
\begin{equation}
  \textup{Pr}\Big(\abs*{(\tcdf(y) - \acdf(y)} > \epsilon \Big) < \delta, \quad \forall \xin' \in \mathcal{A}, y,
\end{equation}
where $\acdf(y) = F_Z(\phi^{-1}(\hat{G}(\xin', y)))$.

As a final step, we remember that the cumulative density is the integral of the probability density function.
Therefore, by Theorem 7.17 of \citet{differentiation_uniform_convergence} and the uniform convergence in the c.d.f.s, it follows that there for any bounds there exists $q(\thet)$ such that:
\begin{equation}
  \textup{Pr}\Big(\big\lvert \apdf - \tpdf\big\rvert > \epsilon \Big) < \delta, \quad \forall \xin' \in \mathcal{A}, y.
\end{equation}
Writing out the probability density functions fully and mapping the univariate function the multivariate we have:
\begin{equation}
  \textup{Pr}\Big(\big\lvert p(y_i = \hat{Y}_i) - p(y_i = Y_i|\xin', \mathcal{D})\big\rvert > \epsilon \Big) < \delta, \quad \forall \xin' \in \mathcal{A}, y_i.
\end{equation}

  \end{proof}

  \end{document}